\def\eqref#1{equation~\ref{#1}}
\def\floor#1{\lfloor #1 \rfloor}
\def\1{\bm{1}}
\def\eps{{\epsilon}}
\def\rvx{{\mathbf{x}}}
\def\vs{{\bm{s}}}
\def\vw{{\bm{w}}}
\def\vx{{\bm{x}}}
\def\vy{{\bm{y}}}
\def\mI{{\bm{I}}}
\DeclareMathAlphabet{\mathsfit}{\encodingdefault}{\sfdefault}{m}{sl}
\SetMathAlphabet{\mathsfit}{bold}{\encodingdefault}{\sfdefault}{bx}{n}
\DeclareMathOperator*{\argmax}{arg\,max}
\DeclareMathOperator*{\argmin}{arg\,min}
\newcommand{\cmark}{\ding{51}}%
\newcommand{\xmark}{\ding{55}}%
\newcommand{\name}{\textsf{DensePure}~}
\newcommand{\namens}{\textsf{DensePure}}
\newtheorem{theorem}{Theorem}[section]
\newtheorem{definition}[theorem]{Definition}
\newtheorem{remark}{Remark}
\DeclareMathOperator*{\dst}{\displaystyle}
\newcommand{\bo}[1]{{\color{orange}Bo: #1}}
\newcommand{\com}[1]{{\color{red}\textbf{Comment}: #1}}
\newcommand{\comtoo}[1]{{\color{purple}\textbf{Comment}: #1}}
\newcommand{\resp}[1]{{\color{cyan}\textbf{Response}: #1}} 
\newcommand{\bo}[1]{}
\newcommand{\com}[1]{}
\newcommand{\comtoo}[1]{}
\newcommand{\resp}[1]{}
\title{\namens: Understanding  Diffusion Models towards Adversarial Robustness }
\author{
Chaowei Xiao\thanks{the first four authors contributed equally} $^{,1,3}$ \quad
Zhongzhu Chen $^{*, 2}$  \quad  Kun Jin $^{*, 2}$   \quad Jiongxiao Wang  $^{*, 1}$ \\
\quad\textbf{Weili Nie} $^{3}$  \quad \textbf{Mingyan Liu}  $^{2}$  \quad \textbf{Anima Anandkumar}$^{3,4}$  \quad \textbf{Bo Li}$^{5}$  \quad \textbf{Dawn Song}$^{6}$ \\
$^{1}$Arizona State University, 
$^{2}$ University of Michigan, Ann Arbor, 
$^{3}$ NVIDIA,
$^{4}$ Caltech,
$^{5}$ UIUC,
$^{6}$ UC Berkeley
}
\begin{document}

\maketitle
\begin{abstract}

Diffusion models have been recently employed to  improve certified robustness through the process of denoising.  However, the theoretical understanding of why diffusion models are able to improve the certified robustness is still lacking, preventing from further improvement.  
In this study, we close this gap by analyzing the fundamental properties of diffusion models and  establishing the conditions under which they can enhance certified robustness.
This deeper understanding allows us to propose a new method   \namens,  designed to improve the certified robustness of a pretrained model (i.e. classifier).  
Given an (adversarial) input, \name consists of multiple runs of denoising via the reverse process of the diffusion model (with different random seeds) to get multiple reversed samples, which are then passed through the classifier, followed by majority voting of inferred labels to make the final prediction.  
This design of using multiple runs of denoising is informed by our theoretical analysis of the conditional distribution of the reversed  sample. 
Specifically, when the \textit{data} density of a clean sample is high, its conditional density under the reverse process in a diffusion model is also high;  thus sampling from the latter conditional distribution can purify the adversarial example and return the corresponding clean sample with a high probability. 
By using the highest density point in the conditional distribution as the reversed sample, we identify the robust region of a given instance under the diffusion model's reverse process. We show that this robust region is a union of multiple convex sets, and is potentially much larger than the robust regions identified in previous works. 
In practice, \name can approximate the label of the high density region in the conditional distribution so that it can enhance certified robustness. 
We conduct extensive experiments to demonstrate the effectiveness of \name by evaluating its certified robustness  given a standard model via randomized smoothing. We show that 
\name is consistently better than existing methods on ImageNet, with 7\% improvement on average. 
\end{abstract}

\vspace{-1mm}

\section{Introduction}



Diffusion models have been shown to be a powerful image generation tool \citep{Ho2020DDPM, Song2021ICLR} owing to their iterative diffusion and denoising processes. These models have achieved state-of-the-art performance on sample quality \citep{dhariwal2021diffusion, vahdat2021score} as well as effective mode coverage \citep{song2021maximum}. 
A diffusion model usually consists of two processes: (i) a forward diffusion process that converts data to noise by gradually adding noise to the input, and (ii) a reverse generative process that starts from noise and generates data by denoising one step at a time~\citep{Song2021ICLR}. 

Given the natural denoising property of diffusion models, \emph{empirical} studies have leveraged them to perform adversarial purification~\citep{nie2022diffusion,wu2022guided,carlini2022certified}. For instance, 
\citet{nie2022diffusion} introduce a diffusion model based purification model  \textit{DiffPure}. 
They {\em empirically} show that by carefully choosing the amount of Gaussian noises added during the diffusion process, adversarial perturbations can be removed while preserving the true label semantics. 
Despite the significant empirical results, there is no provable guarantee of the achieved  robustness. 
\citet{carlini2022certified} instantiate the randomized smoothing approach with the diffusion model to offer a {\em provable guarantee}  of model robustness against   $L_2$-norm bounded adversarial example.
However, they do not provide a theoretical understanding of why and how the diffusion models contribute to  such nontrivial  certified robustness. 

\textbf{Our Approach.}
We theoretically analyze the fundamental properties of diffusion models to understand why and how it enhances certified robustness. This deeper understanding allows us
to propose a new method
\name to improve the certified robustness of any given classifier by more effectively using the diffusion model. 
An illustration of the \name framework is provided in Figure \ref{pipeline}, where it consists of a pretrained diffusion model and a pretrained classifier. \name incorporates two steps: (i) using the reverse process of the diffusion model to obtain a sample of the posterior data distribution conditioned on the adversarial input; and (ii)  
repeating the reverse process multiple times with different random seeds to approximate the label of high density region in the conditional distribution via a majority vote. In particular,  given an adversarial input, we repeatedly feed it into the reverse process of the diffusion model to get multiple reversed examples and feed them into the classifier to get their labels. We then apply the \textit{majority vote} on the set of labels to get the final predicted label.

\name is inspired by our theoretical analysis, where we show that the diffusion model reverse process provides  a conditional distribution of the reversed sample  given an adversarial input, and sampling from this conditional distribution enhances the certified robustness. Specifically, we prove that when the data density of clean samples is high, it is a  sufficient condition for   the conditional density of the reversed samples to be also high. Therefore, in \namens, samples from the conditional distribution can recover the ground-truth labels  with a high probability.

For the convenience of understanding and rigorous analysis, we use the highest density point in the conditional distribution as the deterministic reversed sample for the classifier prediction. 
We show that the robust region for a given sample under the diffusion model's reverse process is the union of multiple convex sets, each surrounding a region around the ground-truth label. 
Compared with the robust region of previous work~\citep{Cohen2019ICML}, which only focuses on the neighborhood of {\em one} region with the ground-truth label, such union of multiple convex sets has the potential to provide a much larger robust region. 
Moreover, the characterization implies that the size of robust regions is affected by the relative density and the distance between data regions with the ground-truth label and those with other labels. 


We conduct extensive experiments on ImageNet and CIFAR-10 datasets under different settings to evaluate the certifiable robustness of \namens. In particular, we follow the setting from \citet{carlini2022certified} and rely on  randomized smoothing to certify robustness to adversarial perturbations bounded in the $\mathcal{L}_2$-norm.   We show that \name achieves the new state-of-the-art \emph{certified} robustness on the clean model without tuning any model parameters (off-the-shelf). 
On ImageNet, it achieves a consistently higher certified accuracy than the existing methods among every $\sigma$ at every radius $\epsilon$ , 7\% improvement on average.     

\vspace{-5mm}
\begin{figure}[h]
\begin{center}
\includegraphics[width=0.75\linewidth]{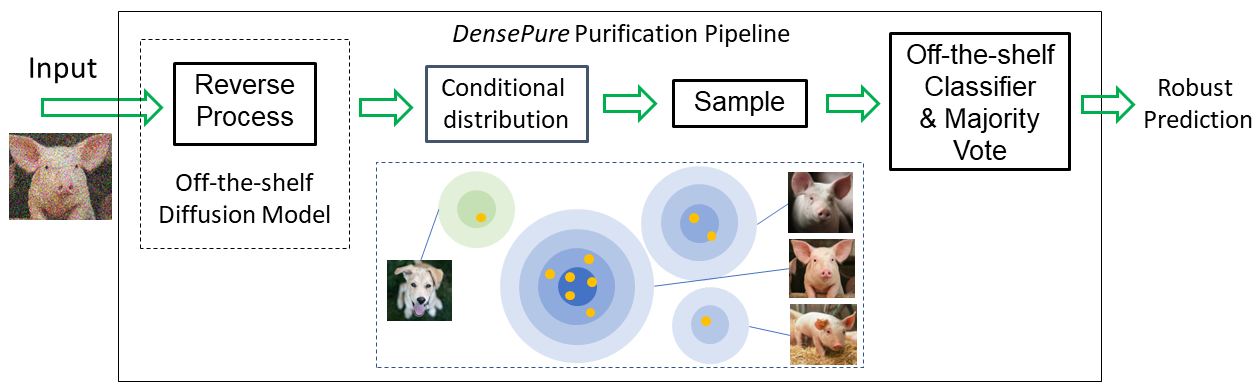}
\end{center}
\vspace{-0.12in}
\caption{Pipeline of \namens.}\label{pipeline}
\vspace{-5mm}
\label{robustfigure}
\end{figure}


\textbf{\underline{Technical Contributions.}}
In this paper, we take the first step towards understanding the sufficient conditions of \textit{adversarial purification} with diffusion models.
We make contributions on both theoretical
and empirical fronts: (1) We prove that under constrained data density property, an adversarial example can be recovered back to the original clean sample with high probability via the reverse process of a diffusion model.
    (2) In theory, we characterized the robust region for each point by further taking the highest density point in the conditional distribution generated by the reverse process as the reversed sample. 
    (3) In practice, we proposed \namens, which is a state-of-art adversarial purification pipeline directly leveraging the reverse process of a pre-trained diffusion model and label \textit{majority vote}.
    (4) We demonstrated comparable performance of \name on CIFAR-10 and state-of-the-art performance on ImageNet.




\section{Preliminaries and Backgrounds}



\paragraph{Continuous-Time Diffusion Model.}
The diffusion model has two components: the \textit{diffusion process} followed by the \textit{reverse process}. Given an input random variable $\rvx_0 \sim p$, the diffusion process adds isotropic Gaussian noises to the data so that the diffused random variable at time $t$ is $\dst \rvx_t = \sqrt{\alpha_t} (\rvx_0 + \boldsymbol{\epsilon}_t)$, s.t., $\dst \boldsymbol{\epsilon}_t \sim \mathcal{N}(\boldsymbol{0}, \sigma_t^2 \mI)$, and $\dst \sigma_t^2 = (1-\alpha_t)/\alpha_t$, and we denote $\dst \rvx_t \sim p_t$. The forward diffusion process can also be defined by the stochastic differential equation
\begin{equation*} \tag{SDE}\label{SDE}
    \dst d \vx = h(\vx, t) dt + g(t) d \vw,
\end{equation*}
where $\dst \vx_0 \sim p$, $\dst h: \mathbb{R}^d \times \mathbb{R} \mapsto \mathbb{R}^d$ is the drift coefficient, $\dst g: \mathbb{R} \mapsto \mathbb{R}$ is the diffusion coefficient, and $\dst \vw(t) \in \mathbb{R}^n$ is the standard Wiener process.

Under mild conditions \ref{appendassump}, the reverse process exists and removes the added noise by solving the reverse-time SDE \citep{anderson1982reverse} 
\begin{equation}\tag{reverse-SDE}\label{reverseSDE}
    d \hat{\vx} = [h(\hat{\vx}, t) - g(t)^2 \triangledown_{\hat{\vx}} \log p_t(\hat{\vx})] dt + g(t) d \overline{\vw},
\end{equation}
where $\dst dt$ is an infinitesimal reverse time step, and $\overline{\vw}(t)$ is a reverse-time standard Wiener process.

In our context, we use the conventions of VP-SDE \citep{Song2021ICLR} where $h(\vx; t): =-\frac{1}{2} \gamma(t)x$ and $g(t):= \sqrt{\gamma(t)}$ with $\gamma(t)$ positive and continuous over $[0,1]$, such that $x(t) = \sqrt{{\alpha}_t} x(0)+ \sqrt{1-{\alpha}_t} \boldsymbol{\epsilon}$ where ${\alpha}_t = e^{-\int_0^t \gamma(s)ds}$ and $\boldsymbol{\epsilon}\sim \mathcal{N}(\boldsymbol{0}, \boldsymbol{I})$. We use $\{\rvx_t\}_{t\in [0,1]}$ and $\{\hat \rvx_t\}_{t\in [0,1]}$ to denote the diffusion process and the reverse process generated by \ref{SDE} and \ref{reverseSDE} respectively, which follow the same distribution.

\paragraph{Discrete-Time Diffusion Model (or DDPM \citep{Ho2020DDPM}). } DDPM constructs a discrete Markov chain $\dst \{\rvx_0, \rvx_1, \cdots,  \rvx_i, \cdots, \rvx_N\}$ as the forward process for the training data $\rvx_0 \sim p$, such that $\dst \mathbb{P}(\rvx_i | \rvx_{i-1}) = \mathcal{N}(\rvx_i; \sqrt{1-\beta_i} \rvx_{i-1}, \beta_i I)$, where $\dst 0 < \beta_1 < \beta_2 < \cdots < \beta_N < 1$ are predefined noise scales such that $\rvx_N$ approximates the Gaussian white noise. Denote $\dst \overline{\alpha}_i = \prod_{i=1}^N (1-\beta_i)$, we have $\dst \mathbb{P}(\rvx_i | \rvx_0) = \mathcal{N}(\rvx_i; \sqrt{\overline{\alpha}_i} \rvx_{0}, (1-\overline{\alpha}_i) \mI)$, i.e., $\dst \rvx_t(\rvx_0, \epsilon) = \sqrt{\overline{\alpha}_i} \rvx_{0} +(1-\overline{\alpha}_i) \boldsymbol{\epsilon}, \boldsymbol{\epsilon} \sim \mathcal{N}(\boldsymbol{0},\mI)$.

The reverse process of DDPM learns a reverse direction variational Markov chain $\dst p_{\boldsymbol{\theta}} (\rvx_{i-1} | \rvx_i) = \mathcal{N}(\rvx_{i-1}; \boldsymbol{\mu}_{\boldsymbol{\theta}} (\rvx_i, i), \Sigma_{\boldsymbol{\theta}} (\rvx_i, i))$. \cite{Ho2020DDPM} defines $\dst \boldsymbol{\epsilon}_{\boldsymbol{\theta}}$ as a function approximator to predict $\boldsymbol{\epsilon}$ from $\vx_i$ such that $\dst \boldsymbol{\mu}_{\boldsymbol{\theta}}(\rvx_i, i) = \frac{1}{\sqrt{1-\beta_i}} \left( \rvx_i - \frac{\beta_i}{\sqrt{1-\overline{\alpha}_i}} \boldsymbol{\epsilon}_{\boldsymbol{\theta}}(\rvx_i,i)\right)$. Then the reverse time samples are generated by $\dst \hat{\rvx}_{i-1} = \frac{1}{\sqrt{1-\beta_i}}\left( \hat \rvx_i - \frac{\beta_i}{\sqrt{1-\overline{\alpha}_i}} \boldsymbol{\epsilon}_{\boldsymbol{\theta}^*}(\hat \rvx_i,i) \right) + \sqrt{\beta_i} \boldsymbol{\epsilon}, \boldsymbol{\epsilon} \sim \mathcal{N}(\pmb{0},I)$, and the optimal parameters $\dst \boldsymbol{\theta}^*$ are obtained by solving $\boldsymbol{\theta}^* := \argmin_{\boldsymbol{\theta}} \mathbb{E}_{\rvx_0,\boldsymbol{\epsilon}}\left[ || \boldsymbol{\epsilon} - \boldsymbol{\epsilon}_{\boldsymbol{\theta}}(\sqrt{\overline{\alpha}_i} \rvx_{0} +(1-\overline{\alpha}_i),i) ||_2^2 \right]$.


\paragraph{Randomized Smoothing.}
Randomized smoothing is used to certify the robustness of a given classifier against $L_2$-norm based perturbation. It transfers the classifier $f$ to a smooth version $ g(\vx) = \argmax_c \mathbb{P}_{\boldsymbol{\epsilon} \sim \mathcal{N}(\boldsymbol{0}, \sigma^2 \boldsymbol{I}) }(f(\vx+\boldsymbol{\epsilon}) = c)$,
where $g$ is the smooth classifier and $\sigma$ is a hyperparameter of the smooth classifier $g$, which  controls the trade-off between robustness and accuracy.   \citet{Cohen2019ICML} shows that $g(x)$ induces the certifiable robustness for $\vx$ under the $L_2$-norm with radius $R$, where $\dst R = \frac{\sigma}{2} \left( \Phi^{-1}(p_A) - \Phi^{-1}(p_B) \right)$; $p_A$ and $p_B$ are probability of the most probable class and ``runner-up''  class respectively; $\Phi$ is the inverse of the standard Gaussian CDF. The $p_A$ and $p_B$ can be estimated with arbitrarily high confidence via Monte Carlo method \citep{Cohen2019ICML}.




\section{Theoretical Analysis}

In this section, we theoretically analyze why and how the diffusion model can enhance the robustness of a given classifier. 
We will analyze directly on \ref{SDE} and \ref{reverseSDE} as they generate the same stochastic processes $\{\rvx_t\}_{t\in [0,T]}$ and the literature works establish an approximation on \ref{reverseSDE}~\citep{Song2021ICLR,Ho2020DDPM}.


We first show that given a diffusion model, solving \ref{reverseSDE} will generate a conditional distribution based on the scaled adversarial sample, which will have high density on data region with high \textit{data} density and near to the adversarial sample in Theorem \ref{distribution:reverse}. See detailed conditions in \ref{appendassump}.

\begin{theorem}\label{distribution:reverse}
Under conditions \ref{appendassump}, solving \eqref{reverseSDE} starting from time $t$ and sample $\dst \vx_{a,t}= \sqrt{\alpha_t} \vx_a$ will generate a reversed random variable $\dst \hat\rvx_0 $ with density $ \dst  \mathbb{P}\left(\hat{\rvx}_0 =\vx| {\hat {\rvx}_t = \vx_{a,t}}\right) \propto p(\vx) \cdot  \frac{1}{\sqrt{\left(2\pi\sigma^2_t\right)^n}} \exp\left({\frac{-|| \vx -\vx_a||^2_2}{2\sigma^2_t}}\right)$,
where $p$ is the data distribution, $\dst \sigma_t^2 = \frac{1-\alpha_t}{\alpha_t}$ is the variance of Gaussian noise added at time $\dst t$ in the diffusion process.
\end{theorem}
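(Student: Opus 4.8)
The plan is to reduce this statement about the reverse-SDE to an elementary Bayes computation on the forward diffusion, exploiting the fact—already invoked in the preliminaries—that \ref{SDE} and \eqref{reverseSDE} generate the same stochastic process $\{\rvx_t\}_{t\in[0,1]}$. First I would record the forward transition kernel. Since the VP-SDE yields $\rvx_t = \sqrt{\alpha_t}\,\rvx_0 + \sqrt{1-\alpha_t}\,\boldsymbol{\epsilon}$ with $\boldsymbol{\epsilon}\sim\mathcal{N}(\boldsymbol{0},\mI)$, the conditional density of $\rvx_t$ given $\rvx_0=\vx$ is the Gaussian
$$p_{t\mid 0}(\vx_{a,t}\mid\vx) = \frac{1}{\sqrt{(2\pi(1-\alpha_t))^n}}\exp\left(-\frac{\|\vx_{a,t}-\sqrt{\alpha_t}\,\vx\|_2^2}{2(1-\alpha_t)}\right).$$

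Next I would argue that the law of $\hat\rvx_0$ conditioned on $\hat\rvx_t=\vx_{a,t}$ coincides with the forward posterior law of $\rvx_0$ given $\rvx_t=\vx_{a,t}$. This is the crux, and it follows from Anderson's time-reversal theorem under the conditions in \ref{appendassump}: the reverse-time process $\{\hat\rvx_s\}$ has identical finite-dimensional distributions to $\{\rvx_s\}$, so the transition kernel obtained by initializing \eqref{reverseSDE} at the deterministic point $\vx_{a,t}$ and integrating down to $s=0$ equals the backward conditional $p_{0\mid t}(\cdot\mid\vx_{a,t})$ of the forward process. Because $p_t$ is a Gaussian convolution of $p$, it has full support on $\R^n$, so conditioning on the specific (and possibly atypical) point $\vx_{a,t}=\sqrt{\alpha_t}\,\vx_a$ is well defined.

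Then I would apply Bayes' rule to the forward process,
$$\mathbb{P}\!\left(\hat\rvx_0=\vx\mid\hat\rvx_t=\vx_{a,t}\right) = p_{0\mid t}(\vx\mid\vx_{a,t}) = \frac{p_{t\mid 0}(\vx_{a,t}\mid\vx)\,p(\vx)}{p_t(\vx_{a,t})},$$
and observe that the denominator $p_t(\vx_{a,t})$ does not depend on $\vx$, so it is absorbed into the proportionality constant. Substituting $\vx_{a,t}=\sqrt{\alpha_t}\,\vx_a$ gives $\|\vx_{a,t}-\sqrt{\alpha_t}\,\vx\|_2^2 = \alpha_t\|\vx-\vx_a\|_2^2$, and dividing by $2(1-\alpha_t)$ yields $\|\vx-\vx_a\|_2^2/(2\sigma_t^2)$ with $\sigma_t^2=(1-\alpha_t)/\alpha_t$, reproducing the claimed Gaussian factor; the precise constant prefactor (whether written with $1-\alpha_t$ or $\sigma_t^2$) is immaterial under $\propto$ since it is independent of $\vx$.

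The main obstacle is the middle step: making precise that initializing the reverse SDE at a fixed state and running it to time $0$ realizes exactly the forward posterior $p_{0\mid t}(\cdot\mid\vx_{a,t})$. The equality of marginal laws from Anderson's theorem is usually phrased for the process started from $p_T$; here I must upgrade it to an equality of transition kernels conditioned on an arbitrary state at time $t$, which is precisely where the regularity hypotheses of \ref{appendassump} (smoothness and integrability of $p_t$ and of the score $\nabla\log p_t$, and nondegeneracy of $g$) do the real work. Once that identification is granted, the remaining computation is the routine Bayes-plus-algebra displayed above.
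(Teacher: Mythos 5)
Your proposal is correct and follows essentially the same route as the paper's proof: invoke the equality in law of the forward process $\{\rvx_t\}$ and reverse process $\{\hat\rvx_t\}$ guaranteed under conditions \ref{appendassump}, then apply Bayes' rule with the Gaussian forward kernel and absorb the $\vx$-independent denominator $p_t(\vx_{a,t})$ into the proportionality constant. Your extra care in writing out the scaling substitution $\|\vx_{a,t}-\sqrt{\alpha_t}\,\vx\|_2^2=\alpha_t\|\vx-\vx_a\|_2^2$ and in noting that the marginal-law equality must really be a joint/path-law equality to justify conditioning is a welcome refinement of rigor, but it is the same argument the paper compresses into ``$\{\rvx_t\}$ and $\{\hat\rvx_t\}$ follow the same distribution'' followed by the chain rule of probability.
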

\vspace{-0.15in}
\begin{proof} (sketch)
Under conditions \ref{appendassump}, we know $\{\rvx_t\}_{t\in [0,1]}$ and $\{\hat \rvx_t\}_{t\in [0,1]}$ follow the same distribution, and then the rest proof follows Bayes' Rule.
\end{proof}
Please see the full proofs of this and the following theorems in Appendix \ref{app:proofs}.

\begin{remark}
Note that $\dst \mathbb{P}\left(\hat{\rvx}_0 =\vx| {\hat {\rvx}_t = \vx_{a,t}}\right)>0$ if and only if $p(\vx)>0$, thus the generated reverse sample will be on the data region where we train classifiers.
\end{remark}

In Theorem \ref{distribution:reverse},
the conditional density $\dst \mathbb{P}\left(\hat{\rvx}_0 = \vx| {\hat{\rvx}_t = \vx_{a,t}}\right)$ is high only if both $\dst p(\vx)$ and the Gaussian term have high values, i.e., $\dst \vx$ has high \textit{data} density and is close to the adversarial sample $\vx_a$. The latter condition is reasonable since adversarial perturbations are typically bounded due to budget constraints. Then, the above argument implies that a reversed sample will have the ground-truth label with a high probability if data region with the ground-truth label has high enough \textit{data} density. 


For the convenience of theoretical analysis and  understanding, we take the point with highest conditional density $\dst \mathbb{P}\left(\hat{\rvx}_0 = \vx| {\hat{\rvx}_t = \vx_{a,t}}\right)$ as the reversed sample, defined as $\mathcal{P}(\vx_{a};t):=\argmax_{\vx} \mathbb{P}\left(\hat{\rvx}_0 = \vx| {\hat{\rvx}_t = \vx_{a,t}}\right)$. $\mathcal{P}(\vx_{a};t)$ is a representative of the high density data region in the conditional distribution and  $\mathcal{P}(\cdot; t)$ is a deterministic purification model. In the following, we characterize the robust region for data region with ground-truth label under $\dst \mathbb{P}\left(\cdot; t\right)$. The robust region and the robust radius for a general deterministic purification model given a classifier are defined below.
\begin{definition}[Robust Region and Robust Radius]\label{def:robust}
Given a classifier $\dst f$ and a point $\dst \vx_0$, let $\mathcal{G}(\vx_0):=\{\vx: f(\vx)=f(\vx_0)\}$ be the data region where samples have the same label as $\vx_0$. Then given a deterministic purification model $\dst \mathcal{P}(\cdot~; \psi)$ with parameter $\dst \psi$, we define the robust region of $\dst \mathcal{G}(\vx_0)$ under $\dst \mathcal{P}$ and $f$ as $ \dst \mathcal{D}_{\mathcal{P}}^{f}\left(\mathcal{G}(\vx_0); \psi\right):=\left\{\vx : 
    f\left(\mathcal{P}(\vx; \psi)\right)
    = f(\vx_0) \right\}$,
i.e., the set of $\vx$ such that purified sample $\dst \mathcal{P}(\vx;\psi)$ has the same label as $\dst \vx_0$ under $f$. Further, we define the robust radius of $\dst \vx_0$ as $ \dst r_{\mathcal{P}}^{f}(\vx_0;\psi):= \max\left\{r: \vx_0+ ru\in \dst \mathcal{D}_{\mathcal{P}}^{f}\left(\vx_0; \psi\right)~, ~ \forall ||u||_2 \le 1 \right\}$,
i.e., the radius of maximum inclined ball of $\dst \mathcal{D}_{\mathcal{P}}^{f}\left(\vx_0; \psi\right)$ centered around $\dst \vx_0$. We will omit $\dst \mathcal{P}$ and $\dst f$ when it is clear from the context and write $\dst \mathcal{D}\left(\mathcal{G}(\vx_0); \psi\right)$ and $\dst r(\vx_0;\psi)$ instead.
\end{definition}
\begin{remark}
In Definition \ref{def:robust}, the robust region (resp. radius) is defined for each class (resp. point). When using the point with highest $\dst \mathbb{P}\left(\hat{\rvx}_0 = \vx| {\hat{\rvx}_t = \vx_{a,t}}\right)$ as the reversed sample, $\psi:=t$.
\end{remark}

Now given a sample $\vx_0$ with ground-truth label, we are ready to characterize the robust region $\dst \mathcal{D}\left(\mathcal{G}(\vx_0); \psi\right)$ under purification model $\mathcal{P}(\cdot ;t)$ and classifier $f$. Intuitively, if the adversarial sample $\dst \vx_a$ is near to $\dst \vx_0$ (in Euclidean distance), $\dst \vx_a$ keeps the same label semantics of $\dst \vx_0$ and so as the purified sample $\dst \mathcal{P}(\vx_a; t)$, which implies that $\dst f\left(\mathcal{P}(\vx_a; \psi)\right) = f(\vx_0)$. However, the condition that $\dst \vx_a$ is near to $\dst \vx_0$ is sufficient but not necessary since we can still achieve $\dst f\left(\mathcal{P}(\vx_a; \psi)\right) = f(\vx_0)$ if $\dst \vx_a$ is near to any sample $\dst \tilde{\vx}_0$ with $\dst f\left(\mathcal{P}(\tilde\vx_a; \psi)\right) = f(\vx_0)$. In the following, we will show that the robust region $\dst \mathcal{D}\left(\mathcal{G}(\vx_0); \psi\right)$ is the union of the convex robust sub-regions surrounding every $\dst \tilde{\vx}_0$ with the same label as $\dst \vx_0$. The following theorem characterizes the convex robust sub-region and robust region respectively.

\begin{theorem}\label{robustregion}
Under conditions \ref{appendassump} and classifier $f$, let $\dst \vx_0$ be the sample with ground-truth label and $\dst \vx_a$ be the adversarial sample, then (i) the purified sample $\dst \mathcal{P}(\vx_a; t)$ will have the ground-truth label if $\dst \vx_a $ falls into the following convex set,
 \begin{align*}
  \dst  \mathcal{D}_{{\tiny\mbox{sub}}}\left(\vx_0;t\right):=\bigcap_{\left\{\vx'_0:f({\vx'_0})\neq f(\vx_0)\right\}} \left\{\vx_a : ({\vx}_a -{\vx_0})^\top ({\vx}'_0-{\vx}_0) < \sigma_t^2 \log\left(\frac{p({\vx}_0)}{p({\vx}'_0)}\right)+\frac{||\vx'_0 -{\vx}_0||^2_2 }{2} \right\},
\end{align*}
and further, (ii) the purified sample $\dst \mathcal{P}(\vx_a; t)$ will have the ground-truth label if and only if $\dst \vx_a $ falls into the following set,
     $\dst \mathcal{D}\left(\mathcal{G}({\vx}_0);t\right) := \bigcup_{\tilde{{\vx}}_0: f\left(\tilde{{\vx}}_0\right) = f\left({\vx}_0\right)} \mathcal{D}_{{\tiny\mbox{sub}}}\left(\tilde{{\vx}}_0;t\right)$.
 In other words, $\dst \mathcal{D}\left(\mathcal{G}({\vx}_0);t\right)$ is the robust region for data region $\mathcal{G}({\vx}_0)$ under $\dst \mathcal{P}(\cdot ; t)$ and $f$.
\end{theorem}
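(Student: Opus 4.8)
The plan is to reduce the whole statement to a pairwise comparison of the posterior density supplied by Theorem~\ref{distribution:reverse}. Since the proportionality constant there does not depend on $\vx$, and $\mathcal{P}(\vx_a;t)$ is by definition $\argmax_{\vx}\mathbb{P}(\hat\rvx_0=\vx\mid\hat\rvx_t=\vx_{a,t})$, taking logarithms shows that $\mathcal{P}(\vx_a;t)$ maximizes $\log p(\vx)-\|\vx-\vx_a\|_2^2/(2\sigma_t^2)$ over $\vx$. Consequently, for two candidate points $\vx_0,\vx'_0$ the posterior at $\vx_0$ strictly exceeds that at $\vx'_0$ if and only if
\[
\log p(\vx_0)-\frac{\|\vx_0-\vx_a\|_2^2}{2\sigma_t^2} > \log p(\vx'_0)-\frac{\|\vx'_0-\vx_a\|_2^2}{2\sigma_t^2}.
\]

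The first step is to simplify this inequality. Expanding the squared norms cancels the $\|\vx_a\|_2^2$ terms; multiplying through by $\sigma_t^2>0$ and rearranging turns it into exactly
\[
(\vx_a-\vx_0)^\top(\vx'_0-\vx_0) < \sigma_t^2\log\frac{p(\vx_0)}{p(\vx'_0)}+\frac{\|\vx'_0-\vx_0\|_2^2}{2},
\]
i.e.\ the half-space that the competitor $\vx'_0$ contributes to $\mathcal{D}_{\mathrm{sub}}(\vx_0;t)$. This identity is the crux of the argument: $\vx_a\in\mathcal{D}_{\mathrm{sub}}(\vx_0;t)$ is equivalent to saying that $\vx_0$ strictly dominates, in posterior density, every point carrying a label different from $f(\vx_0)$. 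Each such constraint is linear in $\vx_a$ (an open half-space), so their intersection $\mathcal{D}_{\mathrm{sub}}(\vx_0;t)$ is convex, establishing the convexity claim.

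With this in hand, part~(i) follows quickly: if $\vx_a\in\mathcal{D}_{\mathrm{sub}}(\vx_0;t)$ then $\vx_0$ strictly beats every differently-labeled point, and since the maximizer $\mathcal{P}(\vx_a;t)$ has posterior at least that of $\vx_0$, it strictly beats every differently-labeled point as well, so it cannot be one of them; hence $f(\mathcal{P}(\vx_a;t))=f(\vx_0)$. For part~(ii), the inclusion $\supseteq$ is immediate by applying part~(i) to each $\tilde\vx_0\in\mathcal{G}(\vx_0)$. For $\subseteq$, given $\vx_a$ with $f(\mathcal{P}(\vx_a;t))=f(\vx_0)$, I set $\tilde\vx_0:=\mathcal{P}(\vx_a;t)$, which lies in $\mathcal{G}(\vx_0)$; being the maximizer it dominates all other points, in particular all differently-labeled ones, so $\vx_a\in\mathcal{D}_{\mathrm{sub}}(\tilde\vx_0;t)$ and the two sides coincide.

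The step I expect to be the main obstacle is the strictness needed in the $\subseteq$ direction of part~(ii). Membership in the \emph{open} half-spaces of $\mathcal{D}_{\mathrm{sub}}(\tilde\vx_0;t)$ requires the maximizer's posterior to be \emph{strictly} larger than that of each differently-labeled competitor, whereas being an argmax only yields weak domination. To close this gap I would invoke that $\mathcal{P}(\vx_a;t)$ is a well-defined single point --- implicit in writing $f(\mathcal{P}(\vx_a;t))$ and guaranteed by conditions~\ref{appendassump} --- so that no differently-labeled point can tie it for the maximum; this upgrades weak to strict domination. I would also record the harmless degenerate cases: by the remark following Theorem~\ref{distribution:reverse}, points with $p(\vx'_0)=0$ carry zero posterior and may be dropped from the intersection, so the logarithms above are always well-defined.
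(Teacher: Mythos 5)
Your proposal is correct and takes essentially the same route as the paper's proof: both reduce everything to the pairwise posterior comparison $\mathbb{P}(\hat\rvx_0=\vx_0\mid\hat\rvx_t=\vx_{a,t})>\mathbb{P}(\hat\rvx_0=\vx'_0\mid\hat\rvx_t=\vx_{a,t})$ via Theorem~\ref{distribution:reverse}, expand the squared norms to get exactly the half-space inequality (intersection of half-spaces giving convexity), and derive (ii) from the behaviour of the argmax $\mathcal{P}(\vx_a;t)$ --- your direct argument taking $\tilde\vx_0:=\mathcal{P}(\vx_a;t)$ is just the contrapositive of the paper's ``only if'' step. Your explicit flagging of the strict-versus-weak domination gap, resolved by invoking that $\mathcal{P}(\vx_a;t)$ is a well-defined single point (no ties), addresses a subtlety the paper's proof leaves implicit, which is extra care rather than a different approach.
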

\begin{proof} (sketch) (i).
Each convex half-space defined by the inequality corresponds to a $\vx_0'$ such that  $\dst f(\vx'_0)\neq f(\vx_0)$ where $\vx_a$ within satisfies $\mathbb{P}\left(\hat{\rvx}_0 = {\vx}_0| {\hat{\rvx}_t = \vx_{a,t}}\right) >  \mathbb{P}\left(\hat{\rvx}_0 = \vx'_0\mid{\hat{\rvx}_t = \pmb{x}_{a,t}}\right)$. This implies that $\dst \mathcal{P}(\vx_a; t) \neq \vx_0'$ and $\dst f\left(\mathcal{P}(\vx_a; \psi)\right) = f(\vx_0)$. The convexity is due to that the intersection of convex sets is convex. (ii). The ``if" follows directly from (i). The ``only if" holds because if $\dst \vx_a \notin  \mathcal{D}\left(\mathcal{G}({\vx}_0);t\right)$, then exists $\dst \tilde{{\vx}}_1$ such that $\dst f(\tilde{{\vx}}_1) \neq f({\vx}_0)$ and $\mathbb{P}\left(\hat{\rvx}_0 =\tilde{\vx}_1| {\hat {\rvx}_t = \vx_{a,t}}\right) >  \mathbb{P}\left(\hat{\rvx}_0 =\tilde{\vx}_0| {\hat {\rvx}_t = \vx_{a,t}}\right), \forall \tilde{\vx}_0$ s.t. $\dst f(\tilde{{\vx}}_0) = f({\vx}_0)$, and thus $\dst f\left(\mathcal{P}(\vx_a; \psi)\right) \neq f(\vx_0)$.
\end{proof}

\begin{remark}
Theorem \ref{robustregion} implies that when data region $\mathcal{G}(\vx_0)$ has higher \textit{data} density and larger distances to data regions with other labels, it tends to have larger robust region and points in data region tends to have larger radius.
\end{remark}

In the literature, people focus more on the robust radius (lower bound) $\dst r\left(\mathcal{G}({\vx}_0);t\right)$ \citep{Cohen2019ICML, carlini2022certified}, which can be obtained by finding the maximum inclined ball inside $\dst \mathcal{D}\left(\mathcal{G}({\vx}_0);t\right)$ centering $\dst {\vx}_0$. Note that although $\dst \mathcal{D}_{{\tiny\mbox{sub}}}\left({\vx}_0;t\right)$ is convex, $\dst \mathcal{D}\left(\mathcal{G}({\vx}_0);t\right)$ is generally not. Therefore, finding $\dst r\left(\mathcal{G}({\vx}_0);t\right)$ is a non-convex optimization problem. In particular, it can be formulated into a disjunctive optimization problem with integer indicator variables, which is typically NP-hard to solve. One alternative could be finding the maximum inclined ball in $\dst \mathcal{D}_{{\tiny\mbox{sub}}}\left({\vx}_0;t\right)$, which can be formulated into a convex optimization problem whose optimal value provides a lower bound for $\dst r\left(\mathcal{G}({\vx}_0);t\right)$. However, $\dst \mathcal{D}\left(\mathcal{G}({\vx}_0);t\right)$ has the potential to provide much larger robustness radius because it might connect different convex robust sub-regions into one, as shown in Figure \ref{robustfigure}.
\vspace{-0.05in}
\begin{figure}[h]
\begin{center}
\includegraphics[width=0.55\linewidth]{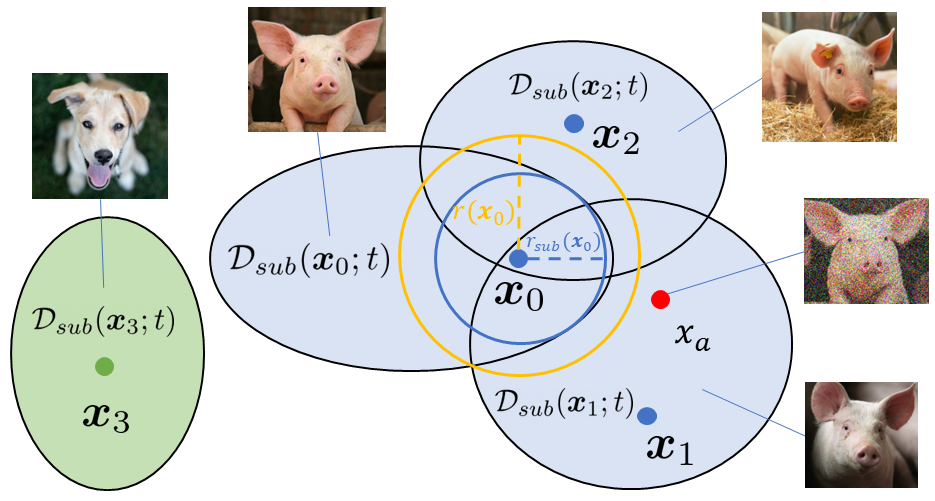}
\end{center}
\vspace{-0.1in}
\caption{An illustration of the robust region $\mathcal{D}(\vx_0; t) = \bigcup_{i=1}^3 \mathcal{D}_{sub}(\vx_i; t)$, where $\vx_0, \vx_1, \vx_2$ are samples with ground-truth label and $\vx_3$ is a sample with another label. $\vx_a = \vx_0+\boldsymbol{\epsilon}_a$ is an adversarial sample such that $\mathcal{P}(\vx_a; t) = \vx_1 \neq \vx_0$ and thus the classification is correct but $\vx_a$ is not reversed back to $\vx_0$. $r_{sub}(\vx_0) < r(\vx_0)$ shows our claim that the union leads to a larger robust radius.}
\vspace{-3mm}

\label{robustfigure}
\end{figure}
\vspace{-0.2in}

In practice, we cannot guarantee to establish an exact reverse process like \ref{reverseSDE} but instead try to establish an approximate reverse process to mimic the exact one. As long as the approximate reverse process is close enough to the exact reverse process, they will generate close enough conditional distributions based on the adversarial sample. Then the density and locations of the data regions in two conditional distributions will not differ much and so is the robust region for each data region. We take the score-based diffusion model in \cite{Song2021ICLR} for an example and demonstrate Theorem \ref{closeconddist} to bound the KL-divergnece between conditional distributions generated by \ref{reverseSDE} and score-based diffusion model. \cite{Ho2020DDPM} showed that using variational inference to fit DDPM is equivalent to optimizing an objective resembling score-based diffusion model with a specific weighting scheme, so the results can be extended to DDPM. 

\begin{theorem}\label{closeconddist}
 Under score-based diffusion model \cite{Song2021ICLR} and conditions \ref{appendassump}, we have $\dst D_{\text{KL}}(\mathbb{P}(\hat \rvx_0 =\vx \mid \hat \rvx_{t} = \vx_{a,t}) \| \mathbb{P}(\rvx^{\theta}_0 =\vx \mid \rvx^{\theta}_{t} = \vx_{a,t})) = \mathcal{J}_{\mathrm{SM}}(\theta, t ; \lambda(\cdot))$,
 where $\{\hat \vx_\tau\}_{\tau\in [0,t]}$ and $\{\vx^\theta_\tau\}_{\tau\in [0,t]}$ are stochastic processes generated by \ref{reverseSDE} and score-based diffusion model respectively, $\dst \mathcal{J}_{\mathrm{SM}}(\theta, t ; \lambda(\cdot)):=\frac{1}{2} \int_0^{t} \mathbb{E}_{p_\tau(\mathbf{x})}\left[\lambda(\tau)\left\|\nabla_{\mathbf{x}} \log p_\tau(\mathbf{x})-\boldsymbol{s}_{\theta}(\mathbf{x}, \tau)\right\|_2^2\right] \mathrm{d} \tau,$ $\boldsymbol{s}_{\theta}(\mathbf{x}, \tau)$ is the score function to approximate $\nabla_{\mathbf{x}} \log p_\tau(\mathbf{x})$,  and $\lambda: \mathbb{R}\rightarrow \mathbb{R}$ is any weighting scheme used in the training score-based diffusion models.
\end{theorem}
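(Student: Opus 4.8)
The plan is to recognize both conditional densities as the time-$0$ marginals of two backward diffusion processes that are launched from the \emph{same} point $\vx_{a,t}$ at time $t$, share the identical diffusion coefficient $g$, and differ only through their drift: the exact process \ref{reverseSDE} uses the true score $\nabla_{\vx}\log p_\tau$, whereas the score-based process $\{\vx^\theta_\tau\}$ replaces it with the learned $\boldsymbol{s}_\theta(\cdot,\tau)$. Since relative entropy can only decrease under the marginalization map (trajectory $\mapsto$ endpoint), I would first relate the target conditional KL to the relative entropy of the two \emph{path measures} on $[0,t]$, and then evaluate the latter exactly via a change of measure.

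First I would set up the two path measures $\mathbb{Q}$ (exact) and $\mathbb{Q}^\theta$ (score-based) on the space of trajectories $\{\hat\vx_\tau\}_{\tau\in[0,t]}$, both conditioned on $\hat\vx_t=\vx_{a,t}$, so that their initial laws at time $t$ coincide (a common point mass). The drift difference is $\Delta b(\vx,\tau) = -g(\tau)^2\big(\nabla_{\vx}\log p_\tau(\vx) - \boldsymbol{s}_\theta(\vx,\tau)\big)$, while the noise coefficient $g(\tau)$ is shared. Invoking Girsanov's theorem, i.e.\ the chain rule for relative entropy of diffusions with a common initial condition, the path KL equals
\begin{equation*}
 D_{\mathrm{KL}}(\mathbb{Q}\,\|\,\mathbb{Q}^\theta) = \frac12\int_0^t \mathbb{E}\!\left[(\Delta b)^\top (g(\tau)^2 \mI)^{-1}(\Delta b)\right]\mathrm{d}\tau = \frac12\int_0^t g(\tau)^2\,\mathbb{E}\!\left[\big\|\nabla_{\vx}\log p_\tau - \boldsymbol{s}_\theta\big\|_2^2\right]\mathrm{d}\tau,
\end{equation*}
where the factor $g(\tau)^4$ coming from $\|\Delta b\|^2$ cancels against the $g(\tau)^{-2}$ from the inverse diffusion matrix, leaving a single weight $g(\tau)^2$.

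Next I would identify the weighting scheme with the likelihood weighting $\lambda(\tau) := g(\tau)^2$ and argue that the expectation is under $p_\tau$: since \ref{SDE} and \ref{reverseSDE} generate the same process under conditions \ref{appendassump}, the law of the exact reverse process at time $\tau$ is exactly the forward marginal $p_\tau$, so $\mathbb{E}[\cdot]=\mathbb{E}_{p_\tau(\vx)}[\cdot]$. Substituting this into the displayed integral reproduces precisely $\mathcal{J}_{\mathrm{SM}}(\theta,t;\lambda(\cdot))$, yielding the claimed identity; the extension to DDPM then follows from the equivalence, noted by \cite{Ho2020DDPM}, between the variational objective and the weighted score-matching loss.

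The main obstacle I anticipate is the rigorous justification of the change of measure: one must verify integrability (Novikov-type) conditions on $\nabla_{\vx}\log p_\tau - \boldsymbol{s}_\theta$ under conditions \ref{appendassump} so that the Radon--Nikodym derivative is a genuine martingale, and one must handle the reverse-time Wiener process $\overline{\vw}$ carefully so that the drift-difference computation is valid in the backward filtration. A secondary, more delicate subtlety is reconciling the expectation under the \emph{conditional} (point-mass-initialized) path measure with the \emph{unconditional} forward marginal $p_\tau$ appearing in $\mathcal{J}_{\mathrm{SM}}$: the cleanest route is to establish the path-space identity first and then pass to the endpoint marginal, noting that the data-processing step is precisely where an inequality could enter, so I would either argue the marginalization is tight here or present $\mathcal{J}_{\mathrm{SM}}$ as the governing bound in the spirit of \cite{Song2021ICLR}.
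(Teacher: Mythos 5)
Your proposal follows essentially the same route as the paper's proof: the paper likewise takes the two path measures $\boldsymbol{\mu}_t,\boldsymbol{\nu}_t$ for the reverse processes conditioned on the common starting point $\vx_{a,t}$, applies Girsanov's theorem to the drift difference $g(\tau)^2\left(\nabla_{\vx}\log p_\tau(\vx)-\boldsymbol{s}_\theta(\vx,\tau)\right)$, kills the stochastic-integral term by the martingale property of It\^{o} integrals, and identifies the result as $\mathcal{J}_{\mathrm{SM}}(\theta,t;g(\cdot)^2)$, i.e.\ the weighting $\lambda(\tau)=g(\tau)^2$. Your closing caveat about the endpoint-marginalization step is apt --- the paper silently equates the marginal KL with the path-space KL, where data processing would strictly give only an inequality (its appendix even says ``we can bound'' before writing an equality) --- but this is a point of rigor you flag correctly rather than a different approach.
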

\vspace{-0.15in}
 \begin{proof}(sketch)
 Let $\dst \boldsymbol{\mu}_{t}$ and $\dst \boldsymbol{\nu}_{t}$ be the path measure for reverse processes $\dst \{\hat \rvx_\tau\}_{\tau\in [0,t]}$ and $\dst \{\rvx^\theta_\tau\}_{\tau\in [0,t]}$ respectively based on the $\vx_{a, t}$.
 Under conditions \ref{appendassump}, $\dst \boldsymbol{\mu}_{t}$ and $\dst \boldsymbol{\nu}_{t}$ are uniquely defined and  the KL-divergence can be computed via the Girsanov theorem \cite{oksendal2013stochastic}.
 \end{proof}
 \begin{remark}
 Theorem \ref{closeconddist} shows that if the training loss is smaller, the conditional distributions generated by \ref{reverseSDE} and score-based diffusion model are closer, and are the same if the training loss is zero.
 \end{remark}

\vspace{-1mm}
\section{\name }

Inspired by the theoretical analysis, we introduce \name  and show how to calculate its  certified robustness radius via the randomized smoothing algorithm. 

\textbf{Framework.}
Our framework, \namens,  consists of two components: (1)  an off-the-shelf diffusion model with reverse process $ \mathbf{rev}$ and (2) an off-the-shelf base classifier $f$.
  

The pipeline of \name is shown in Figure~\ref{pipeline}. Given an input $\vx$, 
we feed it into the reverse process $\mathbf{rev}$ of the diffusion model to get the reversed sample $\mathbf{rev}(\vx)$ and then repeat the above process $K$ times 
to get $K$ reversed samples $\{\mathbf{rev}(\vx)_1, \cdots, \mathbf{rev}(\vx)_{K}\}$. We feed the above $K$ reversed samples into the classifier to get the corresponding prediction $\{f(\mathbf{rev}(\vx)_1), \cdots, f(\mathbf{rev}(\vx)_{K})\}$ and then apply the \textit{majority vote}, termed $\textbf{MV}$, on these predictions to get the final predicted label $\hat{y} = \textbf{MV}(\{f(\mathbf{rev}(\vx)_1), \cdots, f(\mathbf{rev}(\vx)_{K})\}) = \argmax_c  \sum_{i=1}^{K} \pmb{1} \{f(\mathbf{rev}(\vx)_i) = c\}$ .

\textbf{Certified Robustness of \name with Randomized Smoothing.}

In this paragraph, we will illustrate the algorithm to calculate certified robustness of \name via RS, which offers robustness guarantees for a model  under a $L_2$-norm ball.


In particular, we follow the similar setting of \citet{carlini2022certified} which uses a DDPM-based diffusion model. 
The overall algorithm contains three steps:

(1) Our framework  estimates $n$, the number of steps used for the reverse process of DDPM-based diffusion model. 
Since Randomized Smoothing~\citep{Cohen2019ICML} adds Gaussian noise $\boldsymbol{\epsilon}$, where $\dst \boldsymbol{\epsilon} \sim \mathcal{N}(\boldsymbol{0}, \sigma^2 \mI)$, to data input $\vx$ to get the randomized data input,
$\vx_\text{rs} = \vx+\boldsymbol{\epsilon}$, 
we map between the noise required by the randomized example $\vx_\text{rs}$ and the noise required by the diffused data $\vx_n$ (i.e.,  
$\dst \vx_n \sim \mathcal{N}(\vx_n; \sqrt{\overline{\alpha}_n} \vx_0, (1-\overline{\alpha}_n) \mI)$) with $n$ step 
diffusion processing so that $\overline{\alpha}_n = \frac{1}{1+\sigma^2}$.
In this way, we can  compute the corresponding timestep $n$, where $  n = \argmin_{s} \{ |\overline{\alpha}_s -  \frac{1}{1+\sigma^2} | \ |~ s\in \{1, 2, \cdots, N\} \}$.


(2). Given the above calculated  timestep $n$, we scale $\vx_{rs}$ 
with  $\sqrt{\overline{\alpha}_n}$ to obtain the scaled randomized smoothing sample $\sqrt{\overline{\alpha}_n} \vx_{rs}$.
Then we feed $\sqrt{\overline{\alpha}_n} \vx_{rs}$  into the reverse process of the diffusion model by $K$-times to get 
the reversed sample set $\{ \hat{\vx}_{0}^1 , \hat{\vx}_{0}^2, \cdots, \hat{\vx}_{0}^i,\cdots ,\hat{\vx}_{0}^K\}$.

(3). We feed the obtained reversed sample set into a standard \emph{off-the-shelf} classifier $f$ to get the corresponding predicted labels $\{ f(\hat{\vx}_{0}^1), f(\hat{\vx}_{0}^2), \dots, f(\hat{\vx}_{0}^i), \dots ,f(\hat{\vx}_{0}^K)\}$, and apply \textit{majority vote}, denoted $\textbf{MV}({\cdots})$, on these predicted labels to get the final label for $\vx_{rs}$.


\textbf{Fast Sampling.}
To calculate the reversed sample, the standard reverse process of DDPM-based models require repeatedly applying a ``single-step'' operation $n$ times to get the reversed sample $\hat{\vx}_{0}$ (i.e., $\hat{\vx}_{0} = 
\underbrace{ 
\textbf{Reverse}( \cdots \textbf{Reverse}(\cdots \textbf{Reverse}( \textbf{Reverse}(\sqrt{\overline{\alpha}_n} \vx_{rs}; n); n-1); \cdots; i);\cdots 1)
}_{n \text{ steps}}
$). 
Here $\hat{\vx}_{i-1} = \textbf{Reverse}(\hat{\vx}_i; i)$ is equivalent to sample $\hat{\vx}_{i-1}$ from $ \mathcal{N}(\hat{\vx}_{i-1}; \boldsymbol{\mu}_{\boldsymbol{\theta}} (\hat{\vx}_i, i), \boldsymbol{\Sigma}_{\boldsymbol{\theta}} (\hat{\vx}_i, i))$, where
$\dst \boldsymbol{\mu}_{\boldsymbol{\theta}}(\hat{\vx}_i, i) = \frac{1}{\sqrt{1-\beta_i}} \left(\hat{ \vx_i} - \frac{\beta_i}{\sqrt{1-\overline{\alpha}_i}} \boldsymbol{\epsilon}_{\boldsymbol{\theta}}(\hat{\vx}_i,i)\right)$ and 
$\boldsymbol{\Sigma}_{\boldsymbol{\theta}} := \exp(v\log\beta_i+(1-v) \log\widetilde{\beta}_i)$. Here $v$ is a parameter learned by DDPM and $\widetilde{\beta}_i=\frac{1-\overline{\alpha}_{i-1}}{1-\overline{\alpha}_i}$.

To reduce the time complexity, we  use  the uniform sub-sampling strategy from~\citet{nichol2021improved}. We uniformly sample a subsequence with size $b$ from the original $N$-step the reverse process. 
Note that \citet{carlini2022certified} set $b=1$ for the ``one-shot'' sampling, in this way, $\hat{\vx}_0 = \frac{1}{\sqrt{\overline{\alpha}_n}}(\vx_n-\sqrt{1-\overline{\alpha}_n}\boldsymbol{\epsilon}_{\boldsymbol{\theta}}(\sqrt{\overline{\alpha}_n} \vx_{rs},n))$ is a deterministic value so that the reverse process does not obtain a posterior data distribution conditioned on the input.
Instead, we can tune the number of the sub-sampled DDPM steps to be larger than one ($b>1$) to sample from a posterior data distribution conditioned on the input. 
The details about the fast sampling are shown in appendix~\ref{sec:fast}.  

\section{Experiments}\label{experiments}

In this section, we use \name to evaluate certified robustness on two standard datasets, CIFAR-10~\citep{krizhevsky2009learning} and ImageNet~\citep{deng2009imagenet}. 

\textbf{Experimental settings}
We follow the experimental setting from~\citet{carlini2022certified}. Specifically, for CIFAR-10, we use  the 50-M unconditional improved diffusion model from \citet{nichol2021improved} as the diffusion model. We select  ViT-B/16 model \citet{dosovitskiy2020image} pretrained on ImageNet-21k and finetuned on CIFAR-10 as the classifier, which could achieve 97.9\% accuracy on CIFAR-10.  
For ImageNet, we use the unconditional 256$\times$256 guided diffusion model from \citet{dhariwal2021diffusion} as the diffusion model  and pretrained BEiT large model~\citep{bao2021beit} trained on ImageNet-21k as the classifier, which could achieve 88.6\% top-1 accuracy on validation set of ImageNet-1k. 
We select  three different noise levels $\sigma \in \left\{ 0.25, 0.5, 1.0 \right\}$ for certification. For the parameters of \name{}, 
we set  $K = 40$ and $b$ = 10 except the results in ablation study.  The details about the baselines are in the appendix.

\begin{table}[t]
\label{4}
\begin{center}
 \resizebox{\linewidth}{!}{%
\begin{tabular}{lrrrrr|rrrrr}
\toprule
\multicolumn{1}{r}{} &  &\multicolumn{9}{c}{Certified Accuracy at $\eps$(\%)} \\ 
\multicolumn{1}{r}{} &  &\multicolumn{4}{c}{CIFAR-10} &\multicolumn{5}{c}{ImageNet}\\ 
Method              & Off-the-shelf & 0.25        & 0.5        & 0.75        & \multicolumn{1}{r|}{1.0}  &0.5 &1.0 &1.5 &2.0 &3.0      \\ \midrule
PixelDP~\citep{lecuyer2019certified}   &   \xmark        &  $^{(71.0)}22.0$           & $^{(44.0)}2.0$            &-             &-    &  $^{(33.0)}16.0$           & -            &-             &-       &-        \\
RS~\citep{Cohen2019ICML}             &   \xmark        & $^{(75.0)}61.0$            &$^{(75.0)}43.0$            & $^{(65.0)}32.0$            & $^{(65.0)}23.0$   & $^{(67.0)}49.0$            &$^{(57.0)}37.0$            & $^{(57.0)}29.0$           &   $^{(44.0)}19.0$   &$^{(44.0)}12.0$    \\
SmoothAdv ~\citep{salman2019provably}     &   \xmark        &  $^{(82.0)}68.0$        &  $^{(76.0)}54.0$      &      $^{(68.0)}41.0$        &$^{(64.0)}32.0$ &  $^{(63.0)}54.0$        &  $^{(56.0)}42.0$      &      $^{(56.0)}34.0$      & $^{(41.0)}26.0$   &$^{(41.0)}18.0$ \\
Consistency ~\citep{jeong2020consistency}  &   \xmark               & $^{(77.8)}68.8$        & $^{(75.8)}58.1$    &     $^{(72.9)}48.5$        &      $^{(52.3)}37.8$   & $^{(55.0)}50.0$        & $^{(55.0)}44.0$    & $^{(55.0)}34.0$        &$^{(41.0)}24.0$  &$^{(41.0)}17.0$     \\
MACER ~\citep{zhai2020macer}      &   \xmark     & $^{(81.0)}71.0$        &$^{(81.0)}59.0$        &    $^{(66.0)}46.0$         &     $^{(66.0)}38.0$   & $^{(68.0)}57.0$        &$^{(64.0)}43.0$        & $^{(64.0)}31.0$         &   $^{(48.0)}25.0$       &$^{(48.0)}14.0$     \\
Boosting ~\citep{horvath2021boosting}   &   \xmark           & $^{(83.4)}70.6$        &    $^{(76.8)}60.4$        &  $^{(71.6)}\textbf{52.4}$           & $^{(73.0)}\textbf{38.8}$             & $^{(65.6)}57.0$        &    $^{(57.0)}44.6$        & $^{(57.0)}38.4$           & $^{(44.6)}28.6$       &  $^{(38.6)}21.2$ \\
SmoothMix ~\citep{jeong2021smoothmix}    &   \cmark         &  $^{(77.1)}67.9$   &        $^{(77.1)}57.9$    &  $^{(74.2)}47.7$           & $^{(61.8)}37.2$       &  $^{(55.0)}50.0$   &        $^{(55.0)}43.0$    &  $^{(55.0)}38.0$           & $^{(40.0)}26.0$     & $^{(40.0)}17.0$       \\ \midrule
Denoised  ~\citep{salman2020denoised}    &   \cmark         &$^{(72.0)}56.0$       &$^{(62.0)}41.0$ &$^{(62.0)}28.0$ &$^{(44.0)}19.0$  & $^{(60.0)}33.0$       &$^{(38.0)}14.0$ &$^{(38.0)}6.0$ &- &- \\
Lee                ~\citep{lee2021provable} &   \cmark   &60.0&    42.0        &28.0&  19.0   &41.0&    24.0        &11.0& - &-     \\ 
Carlini~\citep{carlini2022certified}       &   \cmark       &$^{(88.0)}73.8 $            &$^{(88.0)}56.2$            &$^{(88.0)}41.6$             &$^{(74.2)}31.0$      &$^{(82.0)}74.0 $            &$^{(77.2.0)}59.8$            &$^{(77.2)}47.0$             &$^{(64.6)}31.0$      & $^{(64.6)}19.0$       \\
\textbf{Ours}             &  \cmark  &$^{(87.6)}$\textbf{76.6}             &$^{(87.6)}$\textbf{64.6}            &$^{(87.6)}{50.4}$             &$^{(73.6)}{37.4}$         &$^{(84.0)}$\textbf{77.8}             &$^{(80.2)}$\textbf{67.0}            &$^{(80.2)}$\textbf{54.6}             &$^{(67.8)}$\textbf{42.2}      &   $^{(67.8)}$\textbf{25.8}        \\ \bottomrule
\end{tabular}}
\end{center}
\caption{Certified accuracy compared with existing works. The certified accuracy at $\epsilon=0$ for each model is in the parentheses. The certified accuracy for each cell is from the respective papers except \citet{carlini2022certified}. Our diffusion model and classifier are the same as \citet{carlini2022certified}, where the off-the-shelf classifier uses  ViT-based architectures  trained on a large dataset (ImageNet-22k).}
\vspace{-3mm}

\label{tbl:cifar}
\end{table}

\begin{figure*}[t] 
\small
\centering
    \begin{minipage}{0.45\linewidth}
        \centering
        \includegraphics[width=\textwidth]{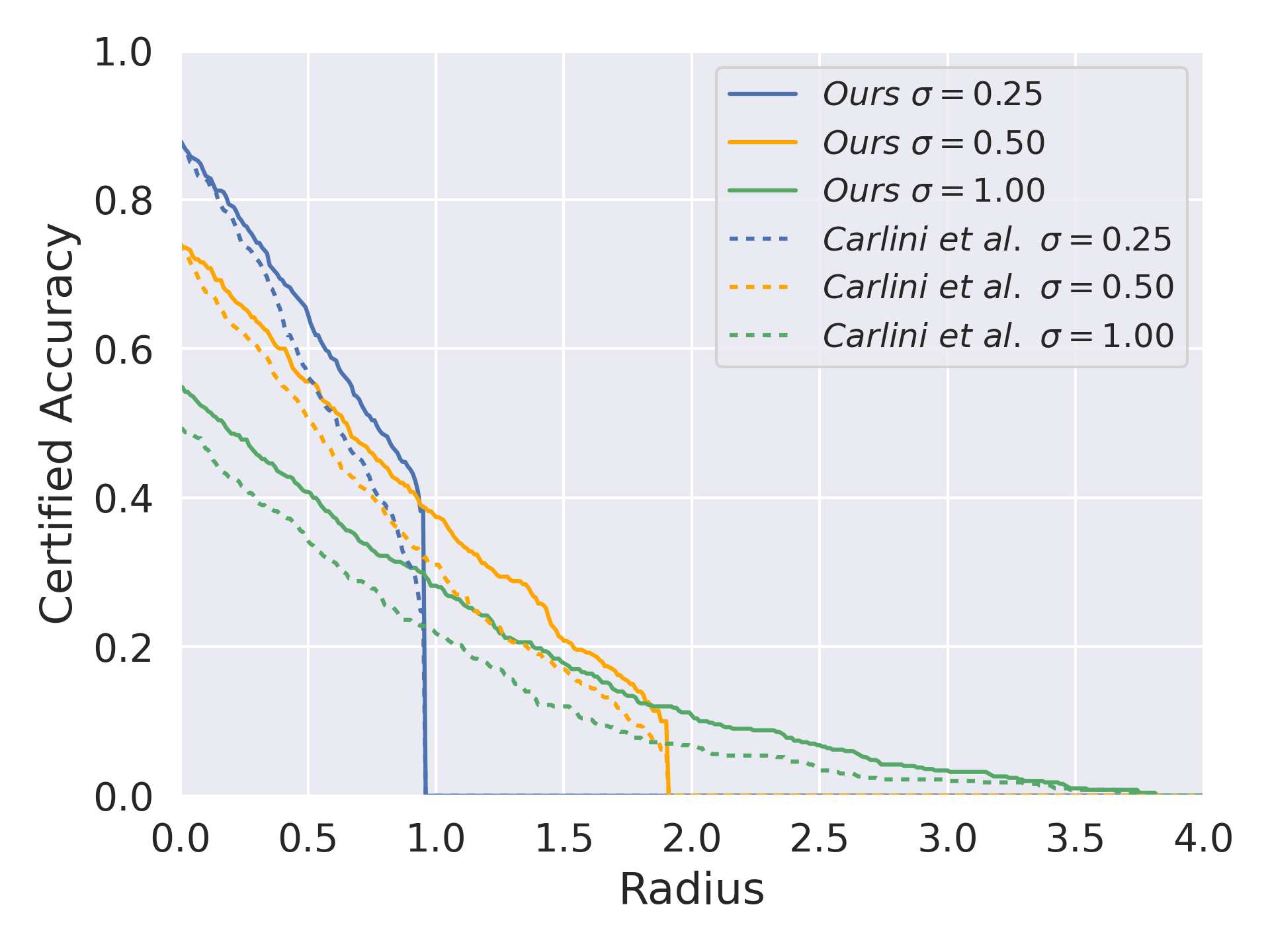}\\
        CIFAR-10
    \end{minipage}\hfill
    \begin{minipage}{0.45\linewidth}
        \centering
        \includegraphics[width=\textwidth]{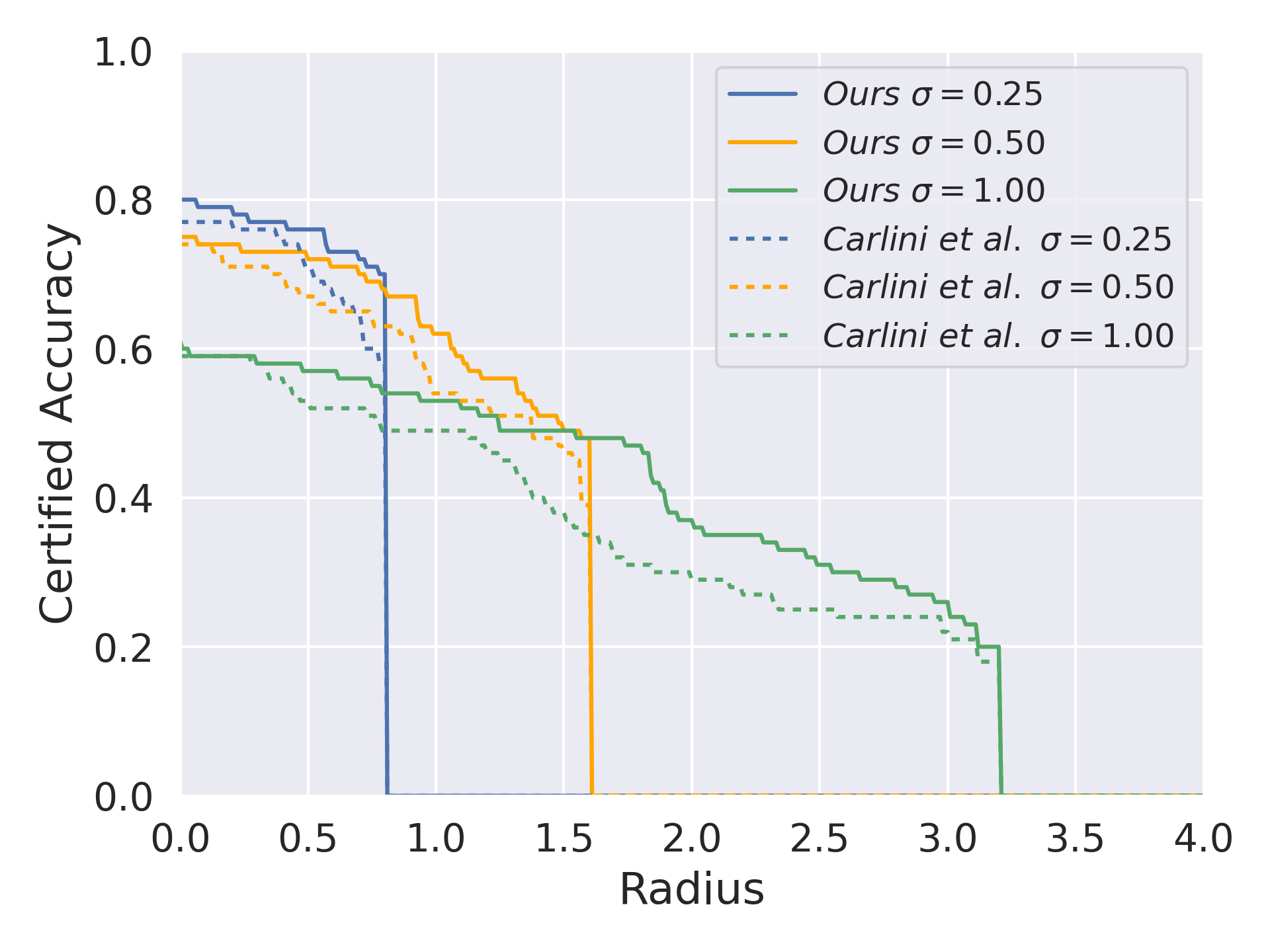}\\
        ImageNet
    \end{minipage}\hfill
\vspace{-2mm}
\caption{Comparing our method vs \citet{carlini2022certified} on CIFAR-10 and ImageNet. The lines represent the certified accuracy with different $L_2$ perturbation bound with different Gaussian noise $\sigma \in \{0.25, 0.50, 1.00\}$.}
\vspace{-5mm}
\label{fig:cifarimg}
\end{figure*}
\vspace{-1mm}

\subsection{Main Results}

We compare our results with other baselines. The results are shown in Table~\ref{tbl:cifar}.

For CIFAR-10, comparing with the models which are {\em carefully} trained with randomized smoothing techniques in an end-to-end manner (i.e., w/o off-the-shelf classifier), we observe that our method with the standard off-the-shelf classifier outperforms them at smaller $\epsilon= \{0.25, 0.5\}$ on both CIFAR-10 and ImageNet datasets while achieves comparable performance at larger $\epsilon=\{0.75, 1.0\}$. Comparing with the non-diffusion model based methods with off-the-shelf classifier (i.e.,  Denoised~\citep{salman2020denoised} and Lee~\citep{lee2021provable}), both our method and \citet{carlini2022certified} are significantly better than them. 
These results verify the non-trivial adversarial robustness improvements introduced from the diffusion model. 
For ImageNet, our method is consistently better than all priors with a large margin.

Since both \citet{carlini2022certified}  and \name use the diffusion model, 
to better understand the importance of our design, that approximates the label of the high density region in the conditional distribution,  we compare \name with \citet{carlini2022certified}  in a more fine-grained manner. 

We show detailed certified robustness of the model among different $\sigma$ at different radius for CIFAR-10 in Figure~\ref{fig:cifarimg}-left and for ImageNet in  Figure~\ref{fig:cifarimg}-right. We also present our results of certified accuracy at different $\eps$ in Appendix~\ref{main}.
From these results, we  find that our method is still consistently better  at most $\epsilon$ (except $\epsilon= 0$) among different $\sigma$. 
The performance margin between ours and \citet{carlini2022certified} will become even larger  with a large $\epsilon$. 
These results further indicate that although the diffusion model improves model robustness, leveraging the posterior data distribution conditioned on the input instance (like \name) via reverse process instead of using single sample (\citep{carlini2022certified})  is the key for better robustness. 
Additionally, we use the off-the-shelf classifiers, which are the VIT-based architectures trained a larger dataset. In the later ablation study section, we select the CNN-based architecture wide-ResNet trained on standard dataset from scratch. Our method still achieves non-trivial robustness.

\begin{figure*}[t] 
\small
\centering
    \begin{minipage}{0.43\linewidth}
        \centering
        \includegraphics[width=\textwidth]{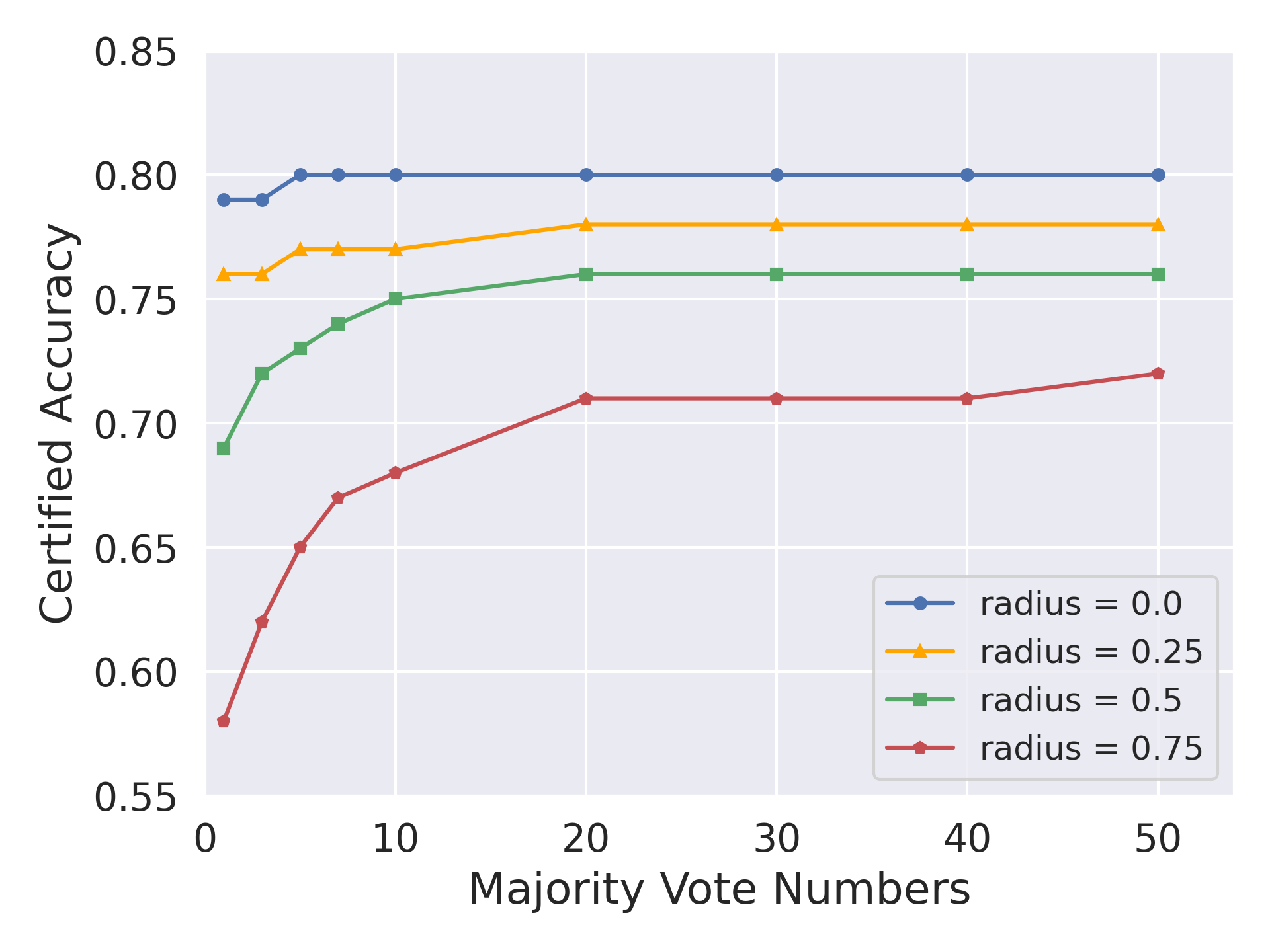}\\
    \end{minipage}
    \begin{minipage}{0.43\linewidth}
        \centering
        \includegraphics[width=\textwidth]{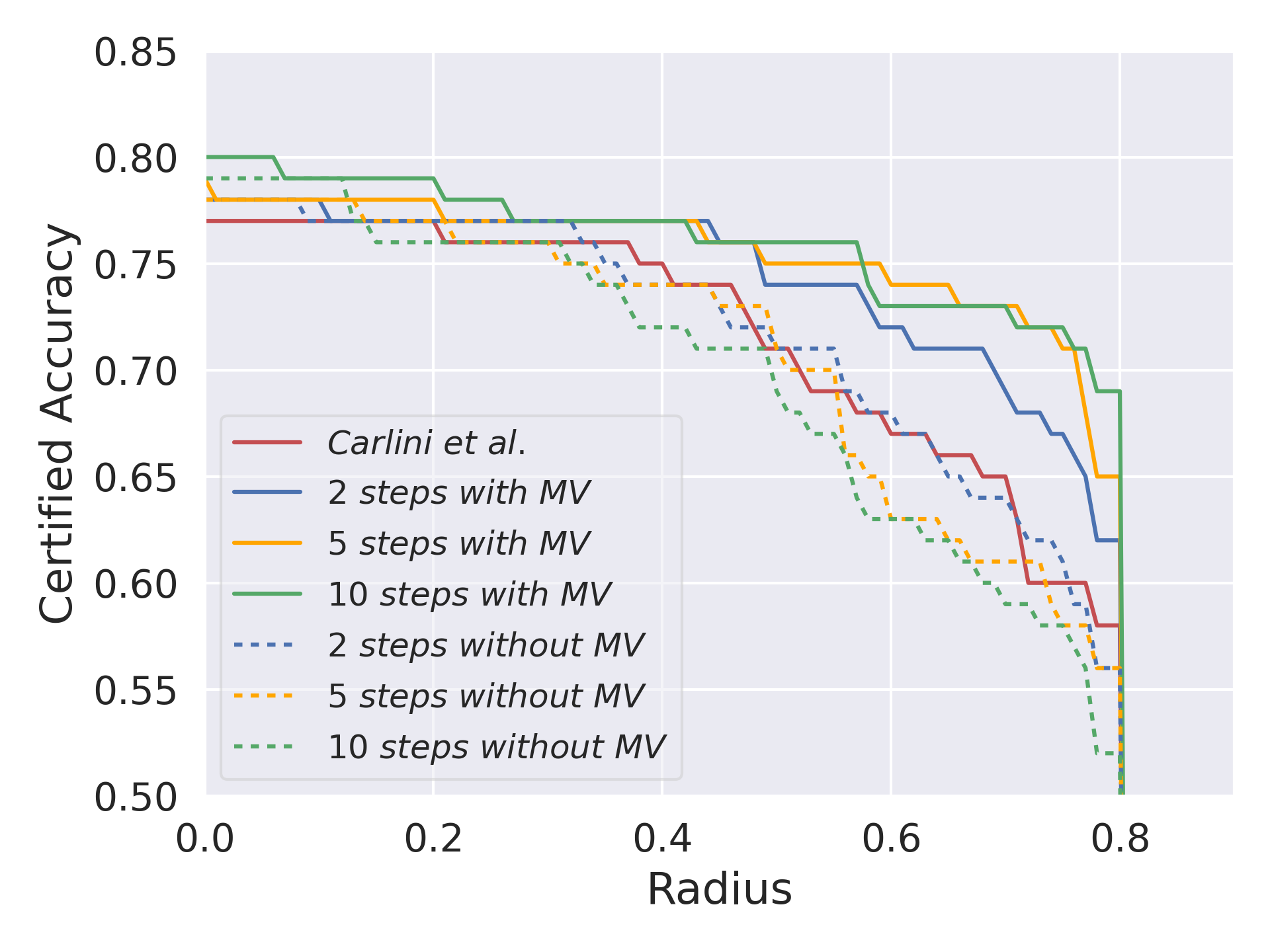}\\
    \end{minipage}
\vspace{-2mm}
\caption{Ablation study on ImageNet. The left image shows the certified accuracy among different vote numbers with different radius $\epsilon \in \{0.0, 0.25, 0.5, 0.75\}$. Each line in the figure represents the certified accuracy of our method among different vote numbers $K$ with Gaussian noise $\sigma=0.25$. The right  image shows the certified accuracy with different fast sampling steps $b$. Each line in the figure shows the certified accuracy among different $L_2$ adversarial perturbation bound.}
\vspace{-5mm}

\label{fig:mv-image}
\end{figure*}



\subsection{Ablation study}

\textbf{Voting samples ($K$)}
We first show how $K$ affects the certified accuracy. For efficiency, we select $b=10$. We conduct experiments for both datasets. We show the certified accuracy among different $r$  at $\sigma=0.25$ in Figure~\ref{fig:mv-image}.  The results for $\sigma=0.5, 1.0$  and CIFAR-10 are shown in the Appendix~\ref{exp:vote}. Comparing with the baseline~\citep{carlini2022certified}, we find that a larger majority vote number leads to a better certified accuracy. 
It verifies that \name indeed benefits the adversarial robustness and  making a good approximation of the label with high density region requires a large number of voting samples.  
We find that our certified accuracy will almost converge at $r=40$. Thus, we set $r=40$ for our experiments. The results with other $\sigma$ show the similar tendency. 


\textbf{Fast sampling steps ($b$)}
To investigate the role of $b$, we conduct additional experiments with $b\in \{2,5\}$ at $\sigma=0.25$. The results on ImageNet are shown in Figure \ref{fig:mv-image} and results for $\sigma=0.5, 1.0$  and CIFAR-10 are shown in the Appendix~\ref{exp:steps}. By observing results {\em with} majority vote, we find that a larger $b$  can lead to a better certified accuracy since  a larger $b$  generates images with higher quality.  By observing results {\em without} majority vote, the results show opposite conclusions where a larger $b$ leads to a lower certified accuracy, which  contradicts to our intuition.  We guess the potential reason is that though more sampling steps can normally lead to better image recovery quality, it also brings more randomness, increasing the probability that the reversed image locates into a data region with the wrong label.
These results further verify that majority vote is necessary for a better performance.

\textbf{Different architectures}
One advantage of \name is to use the off-the-shelf classifier so that it can plug in any classifier. We choose Convolutional neural network (CNN)-based architectures: Wide-ResNet28-10~\citep{zagoruyko2016wide} for CIFAR-10 with  $95.1\%$ accuracy and Wide-ResNet50-2 for ImageNet with  $81.5\%$ top-1 accuracy,  at $\sigma=0.25$. The results are shown in Table~\ref{tbl:wrn_0.25} and Figure~\ref{fig:modelarch} in Appendix~\ref{exp:models}. Results for more model architectures and $\sigma$ of ImageNet are also shown in Appendix~\ref{exp:models}. We  show that our method can enhance the certified robustness of any given classifier trained on the original data distribution.  Noticeably,  although the performance of  CNN-based classifier is lower than Transformer-based classifier,  \name{} with CNN-based model as the classifier  can outperform  \citet{carlini2022certified} with ViT-based model as the classifier (except $\epsilon=0$ for CIFAR-10).

\begin{table}[t]
\centering
 \resizebox{.9\linewidth}{!}{%
\begin{tabular}{lllrrrrlrrrr}
\toprule
          &     &           \multicolumn{10}{c}{Certified Accuracy at $\boldsymbol{\eps}(\%)$}             \\ 
Datasets   & Methods & Model  & 0.0 & 0.25 & 0.5  & 0.75 & Model &0.0 &0.25 &0.5 &0.75               \\ \midrule
CIFAR-10 &Carlini~\citep{carlini2022certified} &ViT-B/16 & \textbf{93.0} & 76.0 & 57.0 & \multicolumn{1}{r|}{47.0}  &WRN28-10 & 86.0       & 66.0       & 55.0       & 37.0             \\
 & \textbf{Ours} &ViT-B/16 & 92.0       & \textbf{82.0}       & \textbf{69.0}       & \multicolumn{1}{r|}{\textbf{56.0}}  &WRN28-10 & \textbf{90.0}       & \textbf{77.0}       & \textbf{63.0}       & \textbf{50.0}            \\
\midrule
ImageNet& Carlini~\citep{carlini2022certified} &BEiT & 77.0 & 76.0 & 71.0 & \multicolumn{1}{r|}{60.0} &WRN50-2 & 73.0 & 67.0 & 57.0 & 48.0  \\
 & \textbf{Ours}  &BEiT & \textbf{80.0} & \textbf{78.0} & \textbf{76.0} & \multicolumn{1}{r|}{\textbf{71.0}}  &WRN50-2 & \textbf{81.0} & \textbf{72.0} & \textbf{66.0} & \textbf{61.0}\\
 \bottomrule

\end{tabular}}
\caption{Certified accuracy of our method among different classifier. BeiT and ViT are pre-trained on a larger dataset ImageNet-22k and fine-tuned at ImageNet-1k and CIFAR-10 respectively.  WideResNet is trained on ImageNet-1k for ImageNet and trained on CIFAR-10 from scratch for CIFAR-10. }
\vspace{-5mm}

\label{tbl:wrn_0.25}
\end{table}





\section{Related Work}

Using an off-the-shelf generative model to purify adversarial perturbations has become an important direction in adversarial defense. 
Previous works have developed various purification methods based on different generative models, such as GANs~\citep{samangouei2018defense}, autoregressive generative models~\citep{song2018pixeldefend}, and energy-based models~\citep{du2019implicit, grathwohl2020your, hill2021stochastic}. More recently, as diffusion models (or score-based models) achieve better generation quality than other generative models~\citep{Ho2020DDPM,dhariwal2021diffusion}, many works consider using diffusion models for adversarial purification~\citep{nie2022diffusion,wu2022guided,sun2022pointdp} 
Although they have found good empirical results in defending against existing adversarial attacks~\citep{nie2022diffusion}, there is no provable guarantee about the robustness about such methods.
On the other hand, certified defenses provide guarantees of robustness~\citep{mirman2018differentiable,Cohen2019ICML,lecuyer2019certified,salman2020denoised,horvath2021boosting, zhang2018efficient,raghunathan2018certified, raghunathan2018semidefinite,salman2019convex,wang2021beta}. They provide a lower bounder of model accuracy under constrained perturbations. 
Among them, approaches~\cite{lecuyer2019certified, Cohen2019ICML,salman2019provably,jeong2020consistency,zhai2020macer,horvath2021boosting,jeong2021smoothmix,salman2020denoised,lee2021provable,carlini2022certified} based on randomized smoothing~\citep{Cohen2019ICML} show the great scalability and achieve promising performance on large network and dataset. 
The most similar work to us is \citet{carlini2022certified}, which  uses diffusion models combined with standard classifiers  for certified defense. They view diffusion model as blackbox without having a theoretical under-
standing of why and how the diffusion models contribute to such nontrivial certified robustness. 


\section{Conclusion}

In this work, we theoretically prove that the diffusion model could purify adversarial examples back to the corresponding clean sample with high probability, as long as the data density of the  corresponding clean samples is high enough. Our theoretical analysis characterizes the conditional distribution of the reversed samples given the adversarial input, generated by the diffusion model reverse process. Using the highest density point in the conditional distribution as the deterministic reversed sample, we identify the robust region of a given instance under the diffusion model reverse process, which is potentially much larger than previous methods. Our analysis inspires us to propose an effective pipeline \namens, for adversarial robustness. We conduct comprehensive experiments to show the effectiveness of \name by evaluating the certified robustness via the randomized smoothing algorithm. Note that \name is an off-the-shelf pipeline that does not require training a smooth classifier. Our results show that  \name achieves the new SOTA certified robustness for perturbation with $\mathcal{L}_2$-norm. We hope that our work sheds light on an in-depth understanding of the diffusion model for adversarial robustness.

\textbf{Limitations.} The time complexity of \name is high since it requires repeating the reverse process multiple times. In this paper, we use fast sampling to reduce the time complexity and show that the setting ($b=2$ and $K=10$) can achieve nontrivial certified accuracy. We leave the more advanced fast sampling strategy as the future direction.

\section*{Ethics Statement}

Our work can positively impact the society by improving the robustness and security of AI systems. We have not involved human subjects or data set releases; instead, we carefully follow the provided licenses of existing data and models for developing and evaluating our method.

\section*{Reproducibility Statement}

For theoretical analysis, all necessary assumptions are listed in \ref{appendassump} and the complete proofs are included in \ref{app:proofs}. The experimental setting and datasets are provided in section \ref{experiments}. The pseudo-code for \name is in \ref{appendpseudocode} and the fast sampling procedures are provided in \ref{sec:fast}.

\bibliographystyle{iclr2023_conference}
\bibliography{iclr2023_conference}

\newpage
\appendix
\appendix


\section*{\fontsize{16}{16}\selectfont Appendix}
\renewcommand{\thetable}{\Alph{table}}
\renewcommand{\thesection}{\Alph{section}}
\renewcommand{\theequation}{S\arabic{equation}}
\renewcommand{\thefigure}{\Alph{figure}}
\setcounter{figure}{0}
\setcounter{table}{0}
Here is the appendix.

\section{Notations}

\bgroup
\def\arraystretch{1.5}
\begin{tabular}{p{1.5in}p{3.25in}}
$\displaystyle p$  & data distribution\\
$\dst \mathbb{P}(A)$ & probability of event $\dst A$\\
$\displaystyle \mathcal{C}^k$  & set of functions with continuous $k$-th derivatives \\
$\displaystyle\vw(t)$ & standard Wiener Process\\
$\displaystyle\overline{\vw}(t)$ & reverse-time standard Wiener Process\\
$\displaystyle h(\vx,t)$  & drift coefficient in SDE\\
$\displaystyle g(t)$  & diffusion coefficient in SDE\\
$\displaystyle \alpha_t $  & scaling coefficient at time $\dst t$\\
$\displaystyle \sigma_t^2 $  & variance of added Gaussian noise at time $\dst t$\\

$\displaystyle \{\rvx_t\}_{t\in [0,1]}$  & diffusion process generated by SDE\\
$\displaystyle \{\hat \rvx_t\}_{t\in [0,1]}$  & reverse process generated by reverse-SDE\\
$\displaystyle p_t$  & distribution of $\rvx_t$ and $\hat \rvx_t$\\
$\displaystyle \{\rvx_1, \rvx_2,\ldots, \rvx_N\}$  & diffusion process generated by DDPM\\
$\dst \{\beta_i\}_{i=1}^N$ & pre-defined noise scales in DDPM \\
$\displaystyle \boldsymbol{\epsilon}_a$  & adversarial attack\\
$\displaystyle \vx_a$  & adversarial sample\\
$\displaystyle \vx_{a,t}$  & scaled adversarial sample\\
$\displaystyle f(\cdot)$  & classifier\\
$\displaystyle g(\cdot)$  & smoothed classifier\\
$\displaystyle \mathbb{P}\left(\hat{\rvx}_0 =\vx| {\hat {\rvx}_t = \vx_{a,t}}\right)$  & density of conditional distribution generated by reverse-SDE based on $ \vx_{a,t}$\\

$\dst \mathcal{P}(\vx_a; t)$ & purification model with highest density point\\
$\dst \mathcal{G}(\vx_0)$ & data region with the same label as $\vx_0$\\
$\dst \mathcal{D}^f_{\mathcal{P}}(\mathcal{G}(\vx_0);t)$ & robust region for $\dst \mathcal{G}(\vx_0)$ associated with base classifier $f$ and purification model $\dst \mathcal{P}$\\
$\dst r^f_{\mathcal{P}}(\vx_0;t)$ & robust radius for the point associated with base classifier $f$ and purification model $\dst \mathcal{P}$\\
$\dst \mathcal{D}_{sub}(\vx_0;t)$ & convex robust sub-region\\
$\displaystyle \vs_\theta(\vx,t)$  & score function\\
$\displaystyle \{\rvx^{\theta}_t\}_{t\in [0,1]}$  & reverse process generated by score-based diffusion model\\
$\displaystyle \mathbb{P}\left({\rvx}^\theta_0 =\vx| {{\rvx}^\theta_t = \vx_{a,t}}\right)$  & density of conditional distribution generated by score-based diffusion model based on $ \vx_{a,t}$\\
$\lambda(\tau)$ & weighting scheme of training loss for score-based diffusion model\\
$\dst \mathcal{J}_{\mathrm{SM}}(\theta, t ; \lambda(\cdot))$ & truncated training loss for score-based diffusion model\\
$\boldsymbol{\mu}_{t}, \boldsymbol{\nu}_{t}$ & path measure for $\dst \{\hat \rvx_\tau\}_{\tau\in [0,t]}$ and $\dst \{\rvx^\theta_\tau\}_{\tau\in [0,t]}$ respectively\\
\end{tabular}
\egroup
\vspace{0.25cm}
\section{More details about Theoretical analysis }
\subsection{Assumptions}\label{appendassump}
\begin{itemize}
    \item[(i)] The data distribution $\dst p \in \mathcal{C}^2$ and $\mathbb{E}_{\vx\sim p} [||\vx||_2^2]< \infty$.
    \item[(ii)] $\forall t \in[0, T]: h(\cdot, t) \in \mathcal{C}^1, \exists C>0,  \forall \vx \in \mathbb{R}^n, t \in[0, T]:||h(\vx, t)||_2 \leqslant C\left(1+||\vx||_2\right)$.
    \item[(iii)] $\exists C>0, \forall \vx, \vy \in \mathbb{R}^n:||h(\vx, t)-h(\vy, t)||_2 \leqslant C\|\vx-\vy\|_2$.
    \item[(iv)] $g \in \mathcal{C} \text { and } \forall t \in[0, T],|g(t)|>0$.
    \item[(v)] $\forall t \in[0, T]: \vs_\theta(\cdot, t) \in \mathcal{C}^1, \exists C>0,  \forall \vx \in \mathbb{R}^n, t \in[0, T]:||\vs_\theta(\vx, t)||_2 \leqslant C\left(1+||\vx||_2\right)$.
    \item[(vi)] $\exists C>0, \forall \vx, \vy \in \mathbb{R}^n:||\vs_\theta(\vx, t)-\vs_\theta(\vy, t)||_2 \leqslant C\|\vx-\vy\|_2$.
\end{itemize}

\subsection{Theorems and Proofs} \label{app:proofs}

\normalfont 
\textbf{Theorem 3.1.}
\textit{Under conditions \ref{appendassump}, solving \eqref{reverseSDE} starting from time $t$ and point $\dst \vx_{a,t}= \sqrt{\alpha_t} \vx_a$ will generate a reversed random variable $\dst \hat\rvx_0 $ with conditional distribution
\begin{align*}
    \dst  \mathbb{P}\left(\hat{\rvx}_0 =\vx| {\hat {\rvx}_t = \vx_{a,t}}\right) \propto p(\vx) \cdot  \frac{1}{\sqrt{\left(2\pi\sigma^2_t\right)^n}} e^{\frac{-|| \vx -\vx_a||^2_2}{2\sigma^2_t}}
\end{align*}
where $\dst \sigma_t^2 = \frac{1-\alpha_t}{\alpha_t}$ is the variance of the Gaussian noise added at timestamp $\dst t$ in the diffusion process \ref{SDE}.}
\begin{proof}
Under the assumption, we know $\{\rvx_t\}_{t\in [0,1]}$ and $\{\hat \rvx_t\}_{t\in [0,1]}$ follow the same distribution, which means
\begin{align*}
   \dst \mathbb{P}\left(\hat{\rvx}_0 = \vx| {\hat{\rvx}_t = \vx_{a,t}}\right) ~=&~ \frac{\mathbb{P}(\hat{\rvx}_0 = \vx,   \hat{\rvx}_t = \vx_{a,t})}{\mathbb{P}(\hat{\rvx}_t = \vx_{a,t})} \\
    = &~ \frac{\mathbb{P}(\rvx_0 = \vx,   \rvx_t = \vx_{a,t})}{\mathbb{P}(\rvx_t = \vx_{a,t})} \\
     = &~ \mathbb{P}\left(\rvx_0=\vx\right)  \frac{ \mathbb{P}(\rvx_t = \vx_{a,t} | \rvx_{0} = \vx)}{\mathbb{P}(\rvx_t = \vx_{a,t})} \\
    \propto&~ \mathbb{P}\left(\rvx_0=\vx\right) \frac{1}{\sqrt{\left(2\pi\sigma^2_t\right)^n}} e^{\frac{-|| \vx -\vx_a||^2_2}{2\sigma^2_t}}\\
    =& ~ p(\vx) \cdot  \frac{1}{\sqrt{\left(2\pi\sigma^2_t\right)^n}} e^{\frac{-|| \vx -\vx_a||^2_2}{2\sigma^2_t}}
\end{align*}
where the third equation is due to the chain rule of probability and the last equation is a result of the diffusion process. 
\end{proof}

\textbf{Theorem 3.3.}
\textit{
Under conditions \ref{appendassump} and classifier $f$, let $\dst \vx_0$ be the sample with ground-truth label and $\dst \vx_a$ be the adversarial sample, then (i) the purified sample $\dst \mathcal{P}(\vx_a; t)$ will have the ground-truth label if $\dst \vx_a $ falls into the following convex set,
 \begin{align*}
  \dst  \mathcal{D}_{{\tiny\mbox{sub}}}\left(\vx_0;t\right):=\bigcap_{\left\{\vx'_0:f({\vx'_0})\neq f(\vx_0)\right\}} \left\{\vx_a : ({\vx}_a -{\vx_0})^\top ({\vx}'_0-{\vx}_0) < \sigma_t^2 \log\left(\frac{p({\vx}_0)}{p({\vx}'_0)}\right)+\frac{||\vx'_0 -{\vx}_0||^2_2 }{2} \right\},
\end{align*}
and further, (ii) the purified sample $\dst \mathcal{P}(\vx_a; t)$ will have the ground-truth label if and only if $\dst \vx_a $ falls into the following set,
     $\dst \mathcal{D}\left(\mathcal{G}({\vx}_0);t\right) := \bigcup_{\tilde{{\vx}}_0: f\left(\tilde{{\vx}}_0\right) = f\left({\vx}_0\right)} \mathcal{D}_{{\tiny\mbox{sub}}}\left(\tilde{{\vx}}_0;t\right)$.
 In other words, $\dst \mathcal{D}\left(\mathcal{G}({\vx}_0);t\right)$ is the robust region for data region $\mathcal{G}({\vx}_0)$ under $\dst \mathcal{P}(\cdot ; t)$ and $f$.
}
\begin{proof}
We start with part (i).

The main idea is to prove that a point $\dst \vx_0'$ such that $\dst f(\vx'_0)\neq f(\vx_0)$ should have lower density than $\dst \vx_0$ in the conditional distribution in Theorem \ref{distribution:reverse} so that $\mathcal{P}(\vx_a; t)$ cannot be $\dst \vx_0'$. In other words, we should have
\begin{align*}
     \mathbb{P}\left(\hat{\rvx}_0 = {\vx}_0| {\hat{\rvx}_t = \vx_{a,t}}\right) >  \mathbb{P}\left(\hat{\rvx}_0 = \vx'_0\mid{\hat{\rvx}_t = \pmb{x}_{a,t}}\right).
\end{align*}
By Theorem \ref{distribution:reverse}, this is equivalent to 
\begin{align*}\dst 
     &~p({\vx}_0) \cdot  \frac{1}{\sqrt{\left(2\pi\sigma^2_t\right)^n}} e^{\frac{-|| \vx_0 -\vx_a||^2_2}{2\sigma^2_t}} >  p({\vx}'_0) \cdot  \frac{1}{\sqrt{\left(2\pi\sigma^2_t\right)^n}} e^{\frac{-|| {\vx}'_0 -\vx_a||^2_2}{2\sigma^2_t}}\\
     \Leftrightarrow&~ \log\left(\frac{p(\vx_0)}{p({\vx}'_0)}\right) > \frac{1}{2\sigma^2_t} \left( || \vx_0 -{\vx}_a||^2_2 -|| {\vx}'_0 -{\vx}_a||^2_2\right)\\
     \Leftrightarrow&~ \log\left(\frac{p(\vx_0)}{p({\vx}'_0)}\right) > \frac{1}{2\sigma^2_t} \left( || \vx_0 -{\vx}_a||^2_2 -|| {\vx}'_0 -\vx_0+\vx_0 -{\vx}_a||^2_2\right)\\
     \Leftrightarrow&~  \log\left(\frac{p(\vx_0)}{p({\vx}'_0)}\right) > \frac{1}{2\sigma^2_t} \left( 2({\vx}_a-{\vx}_0)^\top ({\vx}'_0-{\vx}_0)-\|{\vx}'_0-{\vx}_0\|_2^2\right).
\end{align*}
Re-organizing the above inequality, we obtain
\begin{align*}\label{robust:halfspace}
   \dst ({\vx}_a -{\vx}_0)^\top ({\vx}'_0-{\vx}_0) < \sigma_t^2 \log\left(\frac{p({\vx}_0)}{p({\vx}'_0)}\right)+\frac{1}{2} || {\vx}'_0 -{\vx}_0||^2_2.
\end{align*}
Note that the order of $\dst\vx_a$ is at most one in every term of the above inequality, so the inequality actually defines a half-space in $
\dst \mathbb{R}^n$ for every $\dst(\vx_0, \vx'_0)$ pair. Further, we have to satisfy the inequality for every $\dst\vx'_0$ such that $\dst f(\vx'_0)\neq f(\vx_0)$, therefore, by intersecting over all such half-spaces, we obtain a convex $\dst\mathcal{D}_{{\tiny\mbox{sub}}}\left({\vx}_0;t\right)$.

Then we prove part (ii).

On the one hand, if $\dst \vx_a\in  \mathcal{D}\left(\mathcal{G}({\vx}_0);t\right)$, then there exists one $\dst \tilde{{\vx}}_0$ such that $f(\tilde{\vx}_0) = f(\vx_0)$ and $\dst \vx_a\in \mathcal{D}_{{\tiny\mbox{sub}}}\left(\tilde{{\vx}}_0;t\right)$. By part (i), $\dst \tilde{{\vx}}_0$ has higher probability than all other points with different labels from $\dst {\vx}_0$ in the conditional distribution $\dst  \mathbb{P}\left(\hat{\rvx}_0 =\vx| {\hat {\rvx}_t = \vx_{a,t}}\right)$ characterized by Theorem \ref{distribution:reverse}. Therefore, $\mathcal{P}(\vx_a ; t)$ should have the same label as $\dst {\vx}_0$. On the other hand, if $\dst \vx_a\notin  \mathcal{D}\left(\mathcal{G}({\vx}_0);t\right)$, then there is a point $\dst \tilde{{\vx}}_1$ with different label from $\dst {\vx}_0$ such that for any $\dst \tilde{{\vx}}_0$ with the same label as $\dst {\vx}_0$, $\dst   \mathbb{P}\left(\hat{\rvx}_0 =\tilde{\vx}_1| {\hat {\rvx}_t = \vx_{a,t}}\right) >  \mathbb{P}\left(\hat{\rvx}_0 =\tilde{\vx}_0| {\hat {\rvx}_t = \vx_{a,t}}\right)$. In other words, $\mathcal{P}(\vx_a ; t)$ would have different label from $\dst {\vx}_0$.
\end{proof}

\textbf{Theorem 3.4.}
\textit{
 Under score-based diffusion model \cite{Song2021ICLR} and conditions \ref{appendassump}, we can bound
 \begin{align*}
    \dst D_{\text{KL}}(\mathbb{P}(\hat \rvx_0 =\vx \mid \hat \rvx_{t} = \vx_{a,t}) \| \mathbb{P}(\rvx^{\theta}_0 =\vx \mid \rvx^{\theta}_{t} = \vx_{a,t})) = \mathcal{J}_{\mathrm{SM}}(\theta, t ; \lambda(\cdot))
 \end{align*}
 where $\{\hat \vx_\tau\}_{\tau\in [0,t]}$ and $\{\vx^\theta_\tau\}_{\tau\in [0,t]}$ are stochastic processes generated by \ref{reverseSDE} and score-based diffusion model respectively, $$\dst \mathcal{J}_{\mathrm{SM}}(\theta, t ; \lambda(\cdot)):=\frac{1}{2} \int_0^{t} \mathbb{E}_{p_\tau(\mathbf{x})}\left[\lambda(\tau)\left\|\nabla_{\mathbf{x}} \log p_\tau(\mathbf{x})-\boldsymbol{s}_{\theta}(\mathbf{x}, \tau)\right\|_2^2\right] \mathrm{d} \tau,$$ $\boldsymbol{s}_{\theta}(\mathbf{x}, \tau)$ is the score function to approximate $\nabla_{\mathbf{x}} \log p_\tau(\mathbf{x})$,  and $\lambda: \mathbb{R}\rightarrow \mathbb{R}$ is any weighting scheme used in the training score-based diffusion models.}
\begin{proof}
Similar to proof of \cite[Theorem 1]{song2021maximum}, let $\dst \boldsymbol{\mu}_{t}$ and $\dst \boldsymbol{\nu}_{t}$ be the path measure for reverse processes $\dst \{\hat \rvx_\tau\}_{\tau\in [0,t]}$ and $\dst \{\rvx^\theta_\tau\}_{\tau\in [0,t]}$ respectively based on the scaled adversarial sample $\vx_{a, t}$. Under conditions \ref{appendassump}, the KL-divergence can be computed via the Girsanov theorem \cite{oksendal2013stochastic}:
\begin{align*}
& ~\dst D_{\text{KL}}\left(\mathbb{P}(\hat \rvx_0 =\vx \mid \hat \rvx_{t} = \vx_{a,t}) \| \mathbb{P}(\rvx^{\theta}_0 =\vx \mid \rvx^{\theta}_{t} = \vx_{a,t})\right)  \\
=&~-\mathbb{E}_{\boldsymbol{\mu}_{t}}\left[\log \frac{d \boldsymbol{\nu}_{t}}{d \boldsymbol{\mu}_{t}}\right] \\
\stackrel{(i)}{=}&~ \mathbb{E}_{\boldsymbol{\mu}_{t}}\left[\int_0^{t} g(\tau)\left(\nabla_{\mathbf{x}} \log p_\tau(\mathbf{x})-\boldsymbol{s}_{\theta}(\mathbf{x}, \tau)\right) \mathrm{d} \overline{\mathbf{w}}_\tau+\frac{1}{2} \int_0^{t} g(\tau)^2\left\|\nabla_{\mathbf{x}} \log p_\tau(\mathbf{x})-\boldsymbol{s}_{\theta}(\mathbf{x}, \tau)\right\|_2^2 \mathrm{~d} \tau\right] \\
=&~ \mathbb{E}_{\boldsymbol{\mu}_{t}}\left[\frac{1}{2} \int_0^{t} g(\tau)^2\left\|\nabla_{\mathbf{x}} \log p_\tau(\mathbf{x})-s_{\theta}(\mathbf{x}, \tau)\right\|_2^2 \mathrm{~d} \tau\right] \\
=&~ \frac{1}{2} \int_0^{\tau} \mathbb{E}_{p_\tau(\mathbf{x})}\left[g(\tau)^2\left\|\nabla_{\mathbf{x}} \log p_\tau(\mathbf{x})-s_{\theta}(\mathbf{x}, \tau)\right\|_2^2\right] \mathrm{d} \tau \\
=& ~\mathcal{J}_{\mathrm{SM}}\left(\theta, t ; g(\cdot)^2\right)
\end{align*}
where (i) is due to Girsanov Theorem  and (ii) is due to the martingale property of Itô integrals.
\end{proof}
\section{More details about \name}
\subsection{Pseudo-Code}\label{appendpseudocode}
We provide the pseudo code of \name in Algo.~\ref{alg1} and Alg.~\ref{alg2}
\begin{algorithm}
\caption{\name pseudo-code with the highest density point}\label{alg1}
\begin{algorithmic}[1]

    \State Initialization: choose off-the-shelf diffusion model and classifier $f$, choose $\dst \psi = t$, 
    \State Input sample $\dst \vx_a = \vx_0 + \boldsymbol{\epsilon}_a$
    \State Compute $\dst \hat{\vx}_0 = \mathcal{P}(\vx_{a}; \psi)$
    \State $\hat{y} = f(\hat{\vx}_0)$
\end{algorithmic}
\end{algorithm}

\begin{algorithm}
\caption{\name pseudo-code with majority vote}\label{alg2}
\begin{algorithmic}[1]

    \State Initialization: choose off-the-shelf diffusion model and classifier $f$, choose $\sigma$  
    \State Compute  $\overline{\alpha}_n = \frac{1}{1+\sigma^2}$,
    $ n = \argmin_{s} \left\{ \left|\overline{\alpha}_s-  \frac{1}{1+\sigma^2} \right| \ |~ s\in \{1, 2, \cdots, N\} \right\}$
    \State Generate input sample $\dst \vx_{\text{rs}} = \vx_0 + \boldsymbol{\epsilon}, \boldsymbol{\epsilon} \sim \mathcal{N}(\boldsymbol{0}, \sigma^2 \mI)$ 
    
    \State Choose schedule $S^b$, get $\dst \hat{\vx}_0^i \leftarrow \mathbf{rev}(\sqrt{\overline{\alpha}_n} \vx_{\text{rs}})_i,  i = 1, 2, \dots, K$ with Fast Sampling
    \State $\hat{y} = \textbf{MV}(\{f(\hat{\vx}_0^1), \dots, f(\hat{\vx}_0^{K})\}) = \argmax_c  \sum_{i=1}^{K} \pmb{1} \{f(\hat{\vx}_0^i) = c\}$


\end{algorithmic}
\end{algorithm}

\subsection{Details about Fast Sampling}
\label{sec:fast}

Applying single-step operation $n$ times is a time-consuming process. In order
to reduce the time complexity, 
we follow the method used in ~\citep{nichol2021improved} and  sample a subsequence $S^b$ with $b$ values (i.e., $S^b= \underbrace{ \{n, \floor{n-\frac{n}{b}}, \cdots, 1 \}}_{b}$ 
, where  $S_j^b$ is the $j$-th element in $S^b$ and $S_j^b=  \floor{n - \frac{jn}{b}}, \forall j < b \text{ and } S_b^b = 1$) from the original schedule $S$ (i.e., $S = \underbrace{ \{n, n-1, \cdots, 1\}}_{n}$, where $S_j= j $ is the $j$-th element in $S$).


Within this context, 
we adapt the original $\overline{\alpha}$ schedule $\overline{\alpha}^S$ = $\{\overline{\alpha}_1, \cdots, \overline{\alpha}_i, \cdots, \overline{\alpha}_n\}$ used for single-step to the new schedule $\overline{\alpha}^{S^b}$ = $\{\overline{\alpha}_{S_1^b}, \cdots, \overline{\alpha}_{S_j^b}, \cdots, \overline{\alpha}_{S_b^b}\}$ (i.e.,  $\overline{\alpha}^{S^{b}}_i = \overline{\alpha}_{S_i^b} = \overline{\alpha}_{S_{\floor{n - \frac{i n}{b}} }}$
is the $i$-th element in $\overline{\alpha}^{S^b}$).  
We calculate the corresponding $\beta^{S^b} = \{\beta^{S^b}_1, \beta^{S^b}_2, \cdots, \beta^{S^b}_i, \cdots,\beta^{S^b}_b  \}$ and $\widetilde{\beta}^{S^b} = \{ \widetilde{\beta}^{S^b}_1, \widetilde{\beta}^{S^b}_2, \cdots, \widetilde{\beta}^{S^b}_i, \cdots, \widetilde{\beta}^{S^b}_b \}$ schedules, where
 $ \beta_{S^b_i}=\beta^{S^b}_i = 1 - \frac{\overline{\alpha}^{S^{b}}_i }{\overline{\alpha}^{S^{b}}_{i-1}},   \quad \widetilde{\beta}_{S^b_i}=\widetilde{\beta}^{S^{b}}_i = \frac{1-\overline{\alpha}^{S^{b}}_{i-1}}{1-\overline{\alpha}^{S^b}_{i}}\beta_{S^b_i}$.
With these new schedules,  we can use $b$ times reverse steps to  calculate   
$\hat{\vx}_{0} = \underbrace{\textbf{Reverse}( \cdots \textbf{Reverse}( \textbf{Reverse}(\vx_n; S^b_b); S^b_{b-1}); \cdots ; 1)}_{b}$.
Since $\boldsymbol{\Sigma}_{\boldsymbol{\theta}} (\vx_{S^b_{i}}, S^b_{i})$ is parameterized as a range between $\beta^{S^b}$ and $\widetilde{\beta}^{S^b}$, it will automatically be rescaled. Thus, $\hat{\vx}_{S^b_{i-1}} =  \textbf{Reverse}(\hat{\vx}_{S^b_i}; S^b_i) $  is  equivalent to sample
$\vx_{S^b_{i-1}}$ from $\mathcal{N}(\vx_{S^b_{i-1}}; \boldsymbol{\mu}_{\boldsymbol{\theta}} (\vx_{S^b_{i}}, S^b_{i}), \boldsymbol{\Sigma}_{\boldsymbol{\theta}} (\vx_{S^b_{i}}, S^b_{i}))$.

\section{More Experimental details  and  Results}
\subsection{Implementation details}
We select  three different noise levels $\sigma \in \left\{ 0.25, 0.5, 1.0 \right\}$ for certification. For the parameters of \name{}, 
The sampling numbers when computing the certified radius are $n = 100000$ for CIFAR-10 and $n = 10000$ for ImageNet. 
We evaluate the certified robustness on 500 samples subset of CIFAR-10 testset and 500 samples subset of ImageNet validation set.
we set  $K = 40$ and $b$ = 10 except the results in ablation study.  The details about the baselines are in the appendix. 

\subsection{Baselines.}
We select randomized smoothing based methods including 
PixelDP~\citep{lecuyer2019certified}, RS~\citep{Cohen2019ICML}, SmoothAdv ~\citep{salman2019provably}, Consistency ~\citep{jeong2020consistency}, MACER ~\citep{zhai2020macer}, Boosting ~\citep{horvath2021boosting} , SmoothMix ~\citep{jeong2021smoothmix}, Denoised  ~\citep{salman2020denoised}, Lee~\citep{lee2021provable}, Carlini~\citep{carlini2022certified} as our baselines. 
Among them,  PixelDP, RS,  SmoothAdv, Consistency, MACER, and SmoothMix require  training
a smooth classifier for a better certification performance while the others do not.
\citeauthor{salman2020denoised} and \citeauthor{lee2021provable} use the off-the-shelf classifier  but without using the diffusion model. 
The most similar one compared with us is \citeauthor{carlini2022certified}, which also uses both the off-the-shelf diffusion model and classifier.   
The above two settings mainly refer to \cite{carlini2022certified}, which makes us easier to compared with their results.

\subsection{Main Results for Certified Accuracy}\label{main}
We compare with \citet{carlini2022certified} in a more fine-grained version. We provide  results of certified accuracy at different $\eps$ in Table~\ref{tbl:cifar10ab} for CIFAR-10 and Table~\ref{tbl:imagenetab} for ImageNet. 
We include the accuracy difference between ours and ~\citet{carlini2022certified} in the bracket in Tables. 
We can observe from the tables that the certified accuracy of our method outperforms \citet{carlini2022certified} except $\eps = 0$ at $\sigma = 0.25, 0.5$ for CIFAR-10.

\begin{table}[t]
 \resizebox{\linewidth}{!}{%
\begin{tabular}{llrrrrr}
\toprule
          &               & \multicolumn{5}{c}{Certified Accuracy at $\boldsymbol{\eps}(\%)$}             \\ 
Methods   & Noise         & 0.0        & 0.25       & 0.5        & 0.75       & 1.0        \\ \midrule
             & $\sigma = 0.25$ & \textbf{88.0}       & 73.8       & 56.2       & 41.6       & 0.0        \\
Carlini~\citep{carlini2022certified} & $\sigma = 0.5$  & 74.2       & 62.0       & 50.4       & 40.2       & 31.0       \\
          & $\sigma = 1.0$  & 49.4       & 41.4       & 34.2       & 27.8       & 21.8       \\ \midrule
                 & $\sigma = 0.25$ & 87.6(-0.4) & \textbf{76.6(+2.8)} & \textbf{64.6(+8.4)} & \textbf{50.4(+8.8)} & 0.0(+0.0)  \\
\textbf{Ours}      & $\sigma = 0.5$  & 73.6(-0.6) & 65.4(+3.4) & 55.6(+5.2) & 46.0(+5.8) & \textbf{37.4(+6.4)} \\
          & $\sigma = 1.0$  & 55.0(+5.6) & 47.8(+6.4) & 40.8(+6.6) & 33.0(+5.2) & 28.2(+6.4) \\ \bottomrule

\end{tabular}}
\caption{Certified accuracy compared with \cite{carlini2022certified} for CIFAR-10 at all $\sigma$. The numbers in the bracket are the difference of certified accuracy between two methods. Our diffusion model and classifier are the same as \cite{carlini2022certified}.}
\label{tbl:cifar10ab}
\end{table}

\begin{table}[t]
 \resizebox{\linewidth}{!}{%
\begin{tabular}{llrrrrrr}
\toprule
          &               & \multicolumn{6}{c}{Certified Accuracy at $\boldsymbol{\eps}(\%)$}             \\ 
Methods   & Noise         & 0.0        & 0.5       & 1.0        & 1.5       & 2.0    &3.0        \\ \midrule
             & $\sigma = 0.25$ & 82.0       & 74.0       & 0.0       & 0.0       & 0.0   &0.0     \\
Carlini~\citep{carlini2022certified} & $\sigma = 0.5$  & 77.2       & 71.8       & 59.8      & 47.0       & 0.0 & 0.0      \\
          & $\sigma = 1.0$  & 64.6       & 57.8       & 49.2       & 40.6       & 31.0  & 19.0    \\ \midrule
                 & $\sigma = 0.25$ & \textbf{84.0(+2.0)} & \textbf{77.8(+3.8)} & 0.0(+0.0) & 0.0(+0.0) & 0.0(+0.0) &0.0(+0.0) \\
\textbf{Ours}      & $\sigma = 0.5$  & 80.2(+3.0) & 75.6(+3.8) & \textbf{67.0(+7.2)} & \textbf{54.6(+7.6)} & 0.0(+0.0) & 0.0(+0.0)\\
          & $\sigma = 1.0$  & 67.8(+3.2) & 61.4(+3.6) & 55.6(+6.4) & 50.0(+9.4) & \textbf{42.2(+11.2)} & \textbf{25.8(+6.8)} \\ \bottomrule

\end{tabular}}
\caption{Certified accuracy compared with \cite{carlini2022certified} for ImageNet at all $\sigma$. The numbers in the bracket are the difference of certified accuracy between two methods. Our diffusion model and classifier are the same as \cite{carlini2022certified}.}
\label{tbl:imagenetab}
\end{table}

\subsection{Experiments for Voting Samples}\label{exp:vote}
Here we provide more experiments with $\sigma \in \{ 0.5, 1.0\}$ and $b=10$ for different voting samples $K$ in Figure~\ref{fig:mv_0.5} and Figure~\ref{fig:mv_1.0}.  The results for CIFAR-10 is in Figure~\ref{fig:mv-cifar}. We can draw the same conclusion mentioned in the main context .

\begin{figure*}[t]
\small
\centering
    \begin{minipage}{0.45\linewidth}
        \centering
        \includegraphics[width=\textwidth]{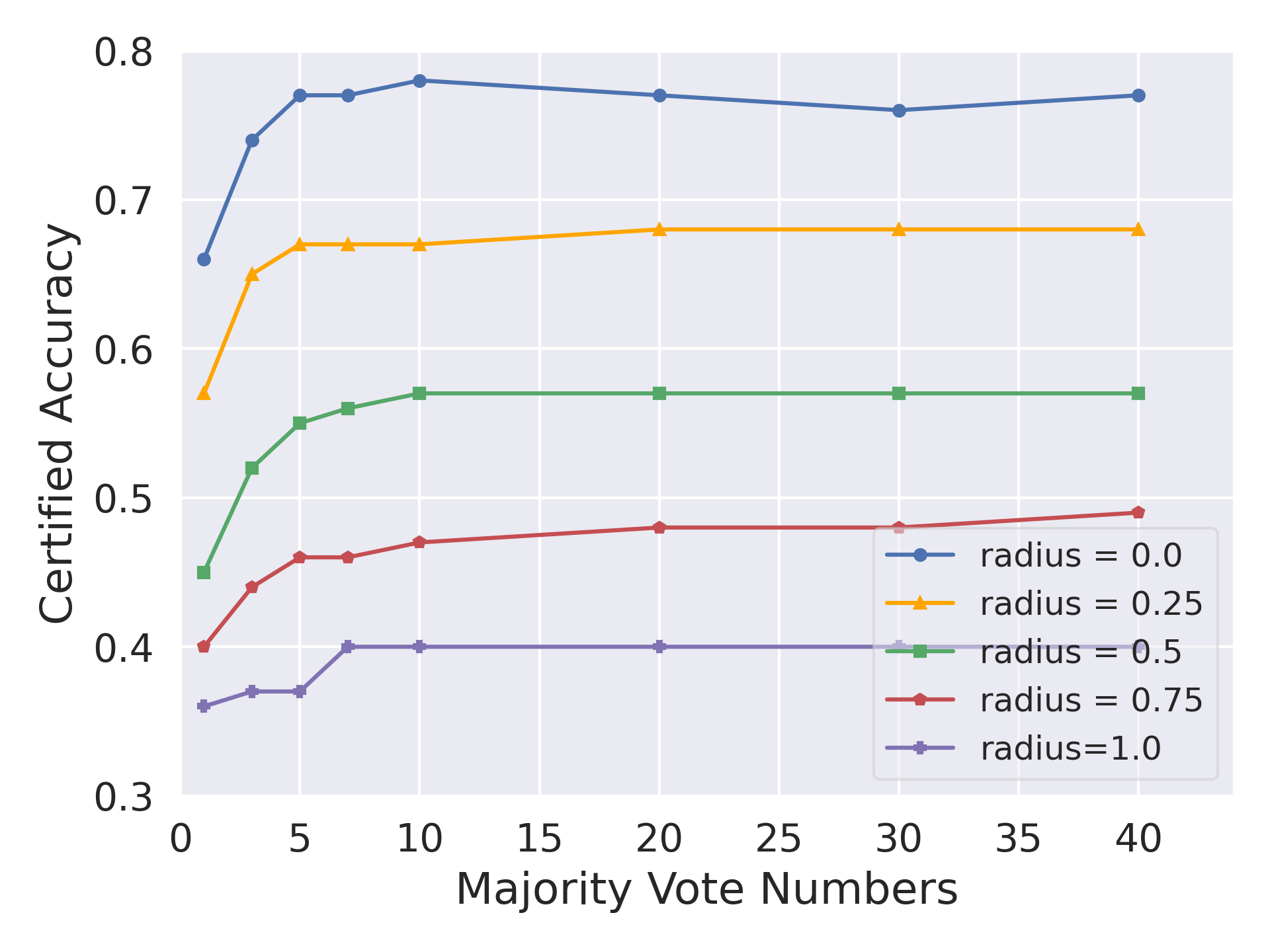}\\
        CIFAR=10
    \end{minipage}\hfill
    \begin{minipage}{0.45\linewidth}
        \centering
        \includegraphics[width=\textwidth]{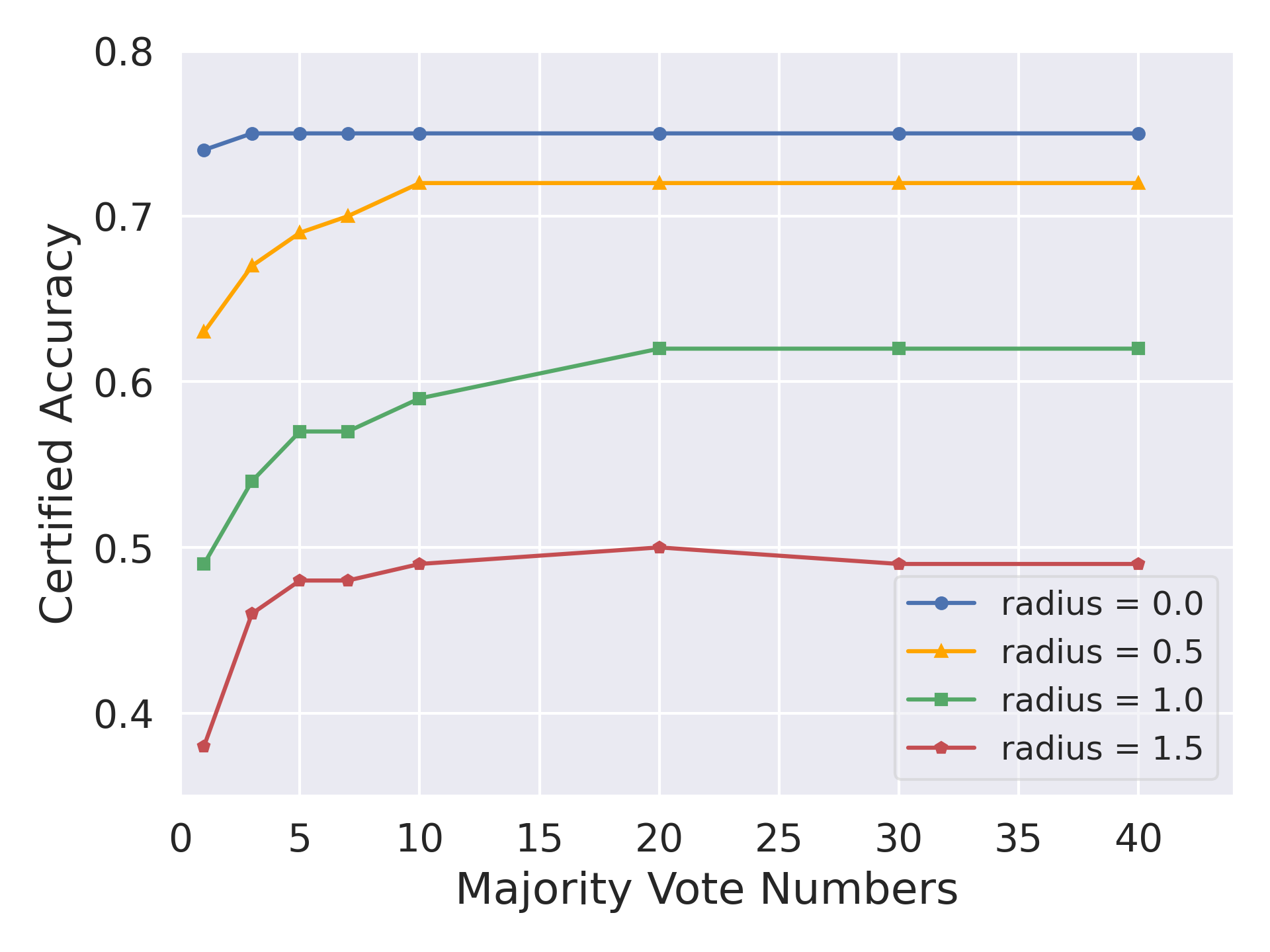}\\
        ImageNet
    \end{minipage}\hfill
\vspace{-2mm}
\caption{Certified accuracy among different vote numbers with different radius. Each line in the figure represents the certified accuracy among different vote numbers K with Gaussian noise $\sigma=0.50$.}
\label{fig:mv_0.5}
\end{figure*}

\begin{figure*}[t] 
\small
\centering
    \begin{minipage}{0.45\linewidth}
        \centering
        \includegraphics[width=\textwidth]{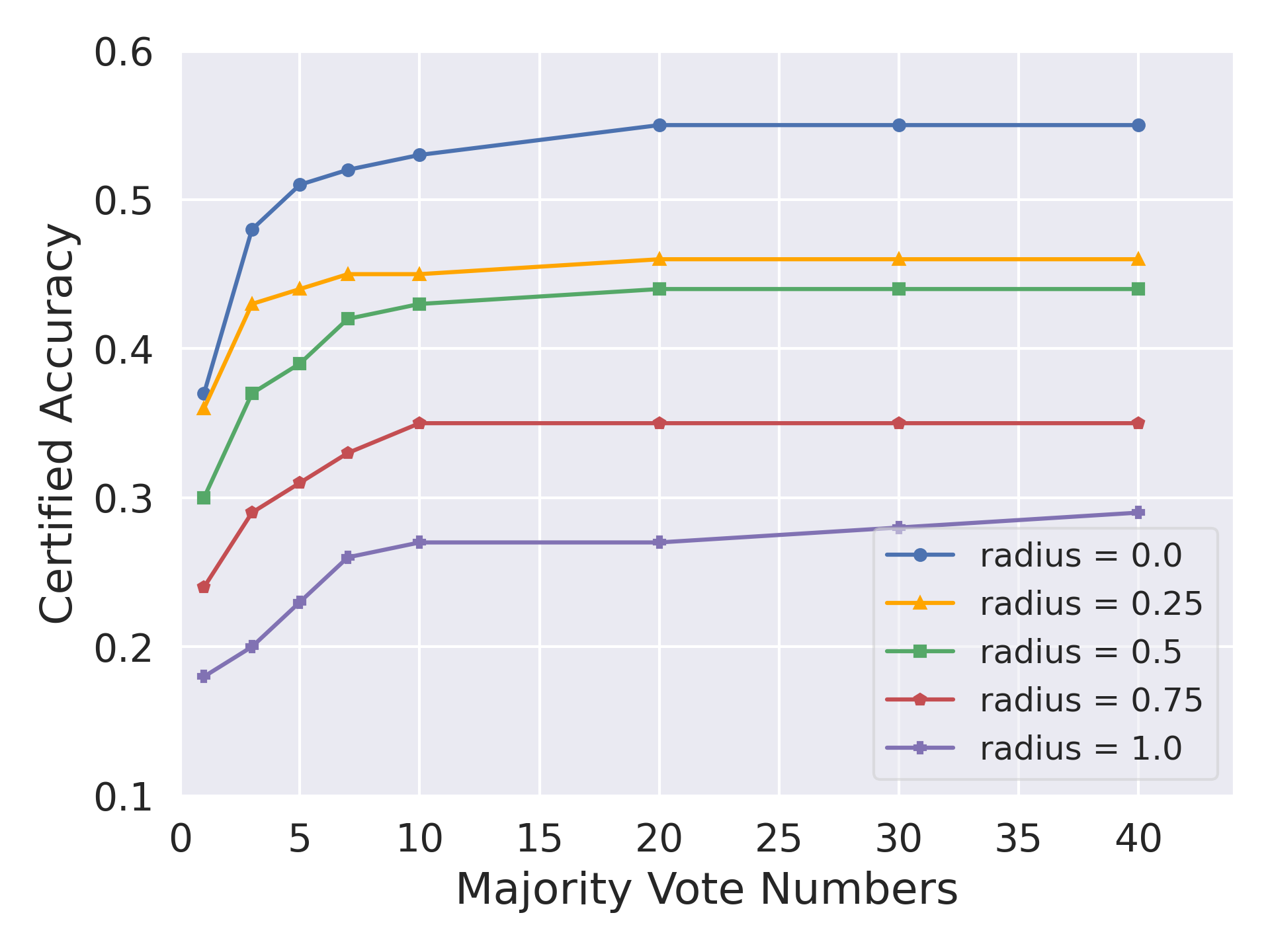}\\
        CIFAR=10
    \end{minipage}\hfill
    \begin{minipage}{0.45\linewidth}
        \centering
        \includegraphics[width=\textwidth]{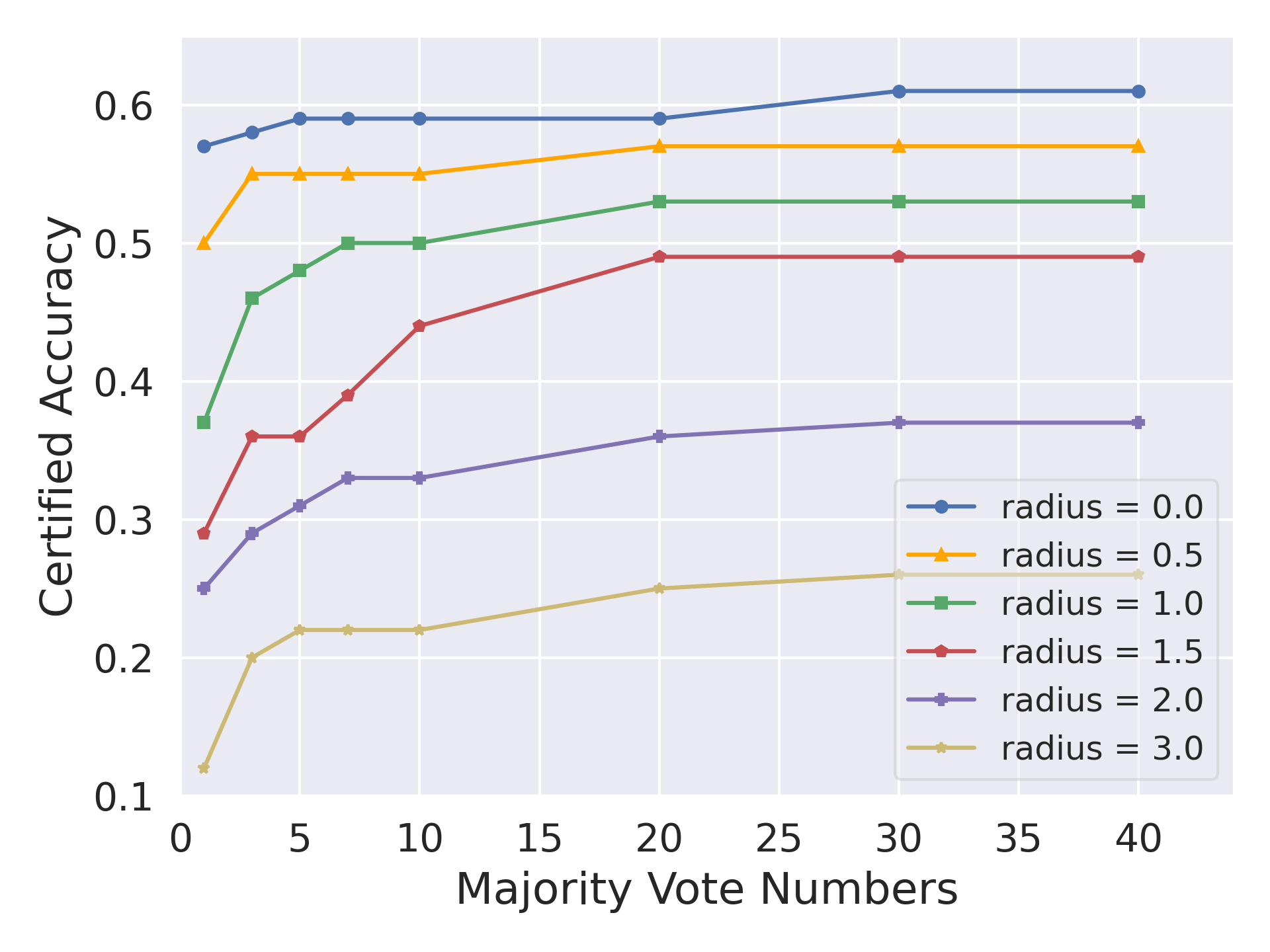}\\
        ImageNet
    \end{minipage}\hfill
\vspace{-2mm}
\caption{Certified accuracy among different vote numbers with different radius. Each line in the figure represents the certified accuracy among different vote numbers K with Gaussian noise $\sigma=1.00$.}
\label{fig:mv_1.0}
\end{figure*}

\subsection{Experiments for Fast Sampling Steps}\label{exp:steps}
We also implement additional experiments with $b \in \{1, 2, 10\}$ at $\sigma = 0.5, 1.0$. The results are shown in Figure~\ref{fig:steps_0.5} and Figure~\ref{fig:steps_1.0}. The results for CIFAR-10 are in Figure~\ref{fig:mv-cifar}. We draw the same conclusion as mentioned in the main context.

\begin{figure*}[t] 
\small
\centering
    \begin{minipage}{0.45\linewidth}
        \centering
        \includegraphics[width=\textwidth]{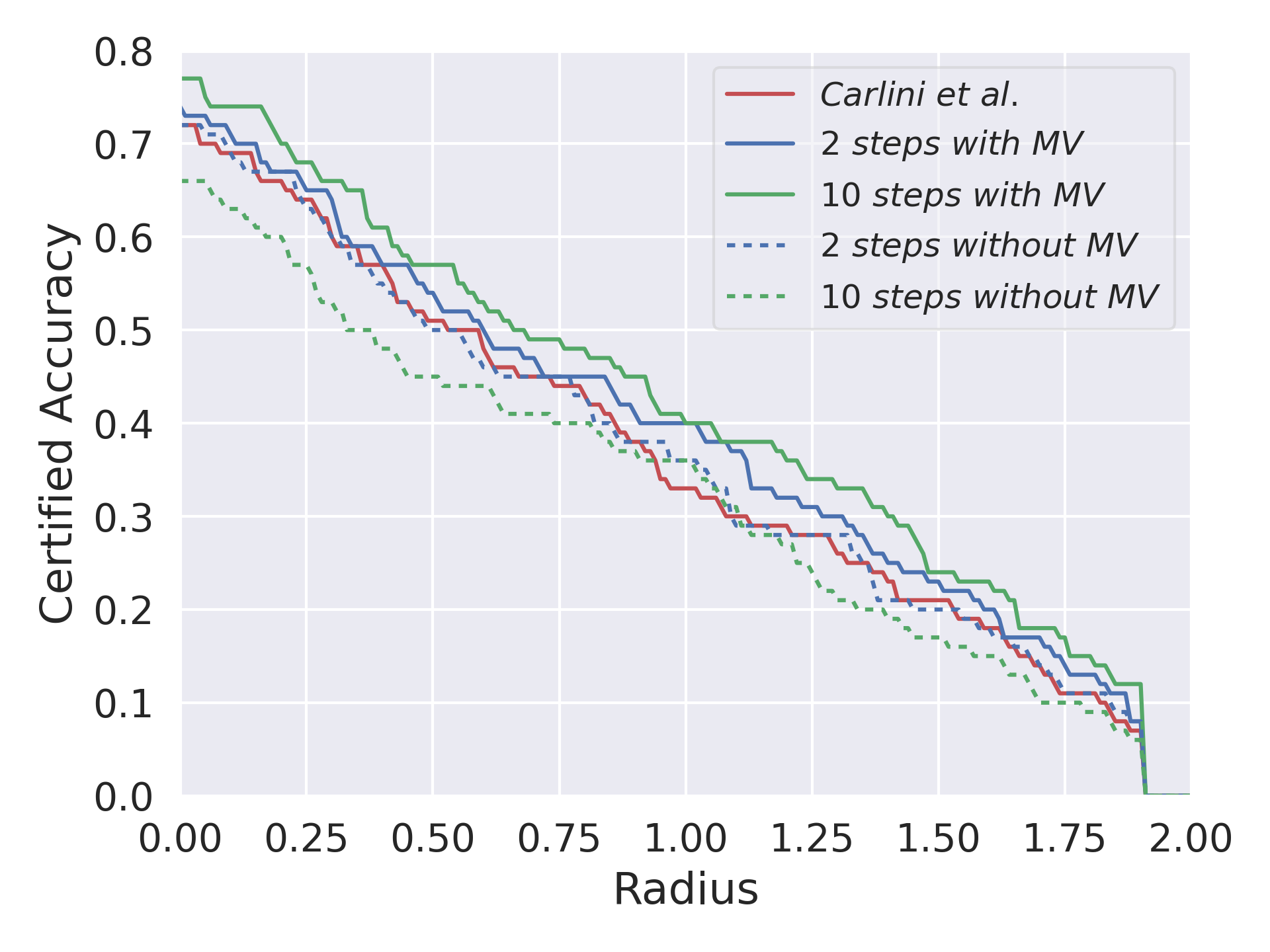}\\
        CIFAR=10
    \end{minipage}\hfill
    \begin{minipage}{0.45\linewidth}
        \centering
        \includegraphics[width=\textwidth]{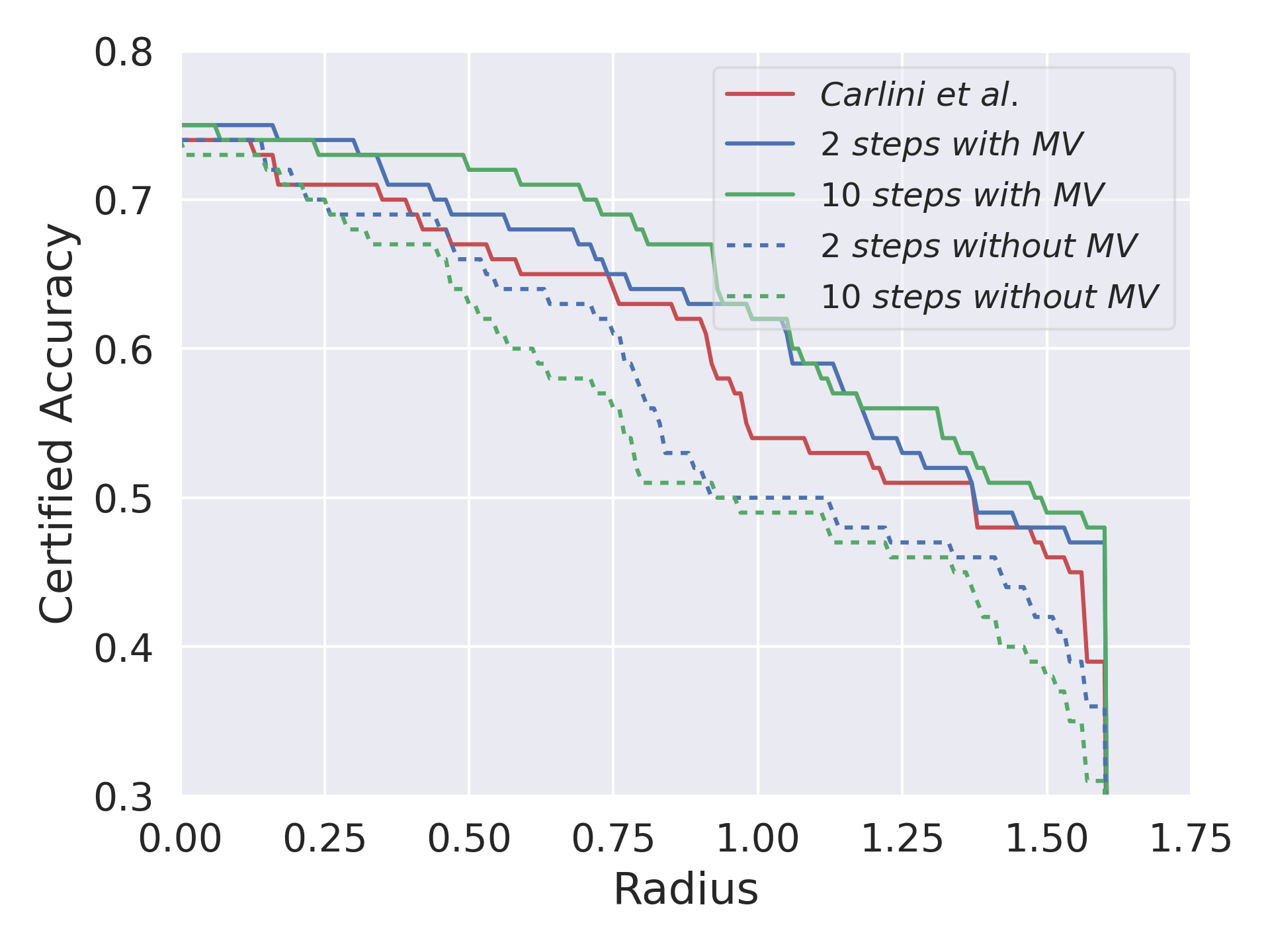}\\
        ImageNet
    \end{minipage}\hfill
\vspace{-2mm}
\caption{Certified accuracy with different fast sampling steps $b$. Each line in the figure shows the certified accuracy among different $L_2$ adversarial perturbation bound with Gaussian noise $\sigma=0.50$.}
\label{fig:steps_0.5}
\end{figure*}

\begin{figure*}[t] 
\small
\centering
    \begin{minipage}{0.45\linewidth}
        \centering
        \includegraphics[width=\textwidth]{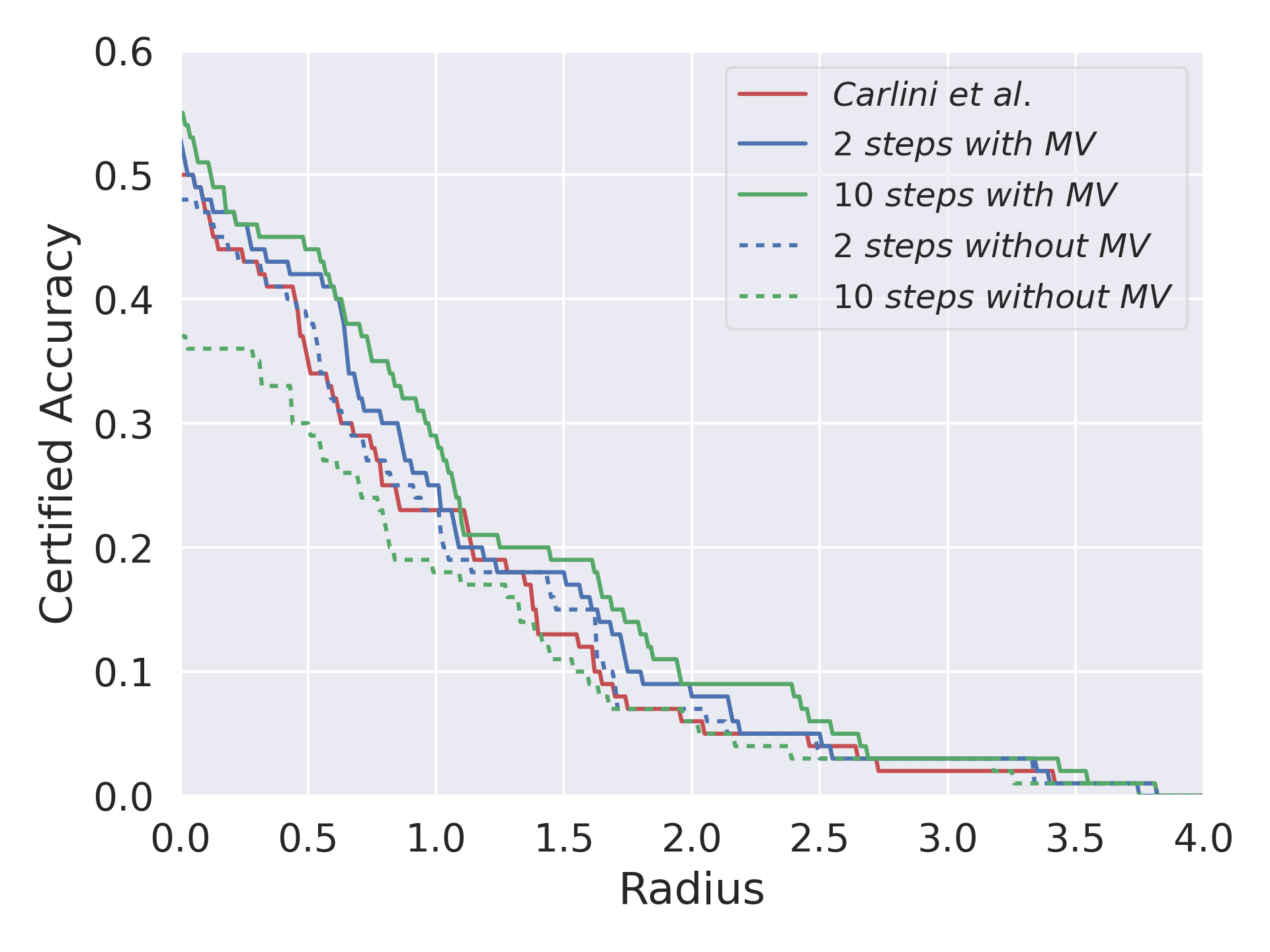}\\
        CIFAR=10
    \end{minipage}\hfill
    \begin{minipage}{0.45\linewidth}
        \centering
        \includegraphics[width=\textwidth]{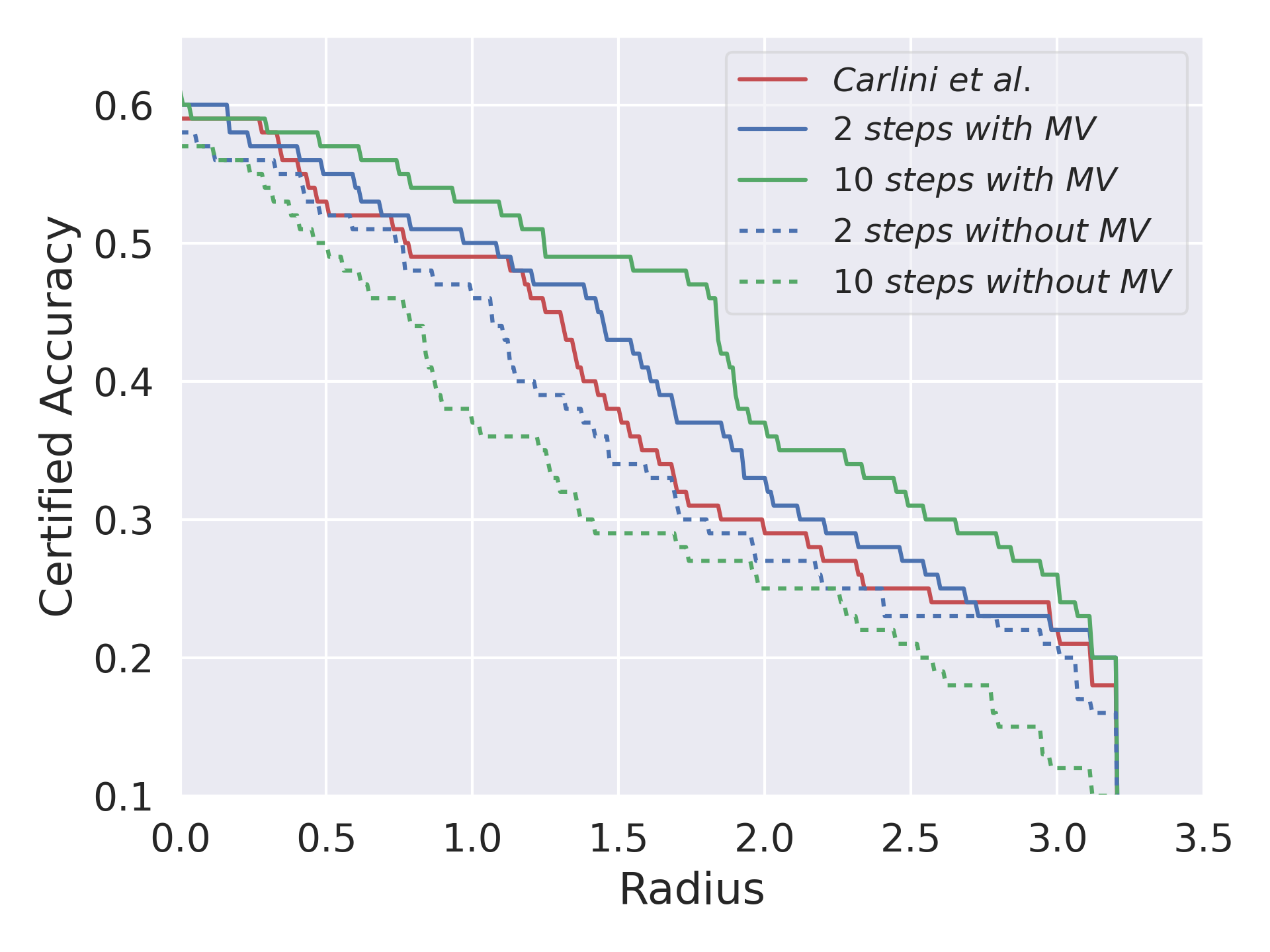}\\
        ImageNet
    \end{minipage}\hfill
\vspace{-2mm}
\caption{Certified accuracy with different fast sampling steps $b$. Each line in the figure shows the certified accuracy among different $L_2$ adversarial perturbation bound with Gaussian noise $\sigma=1.00$.}
\label{fig:steps_1.0}
\end{figure*}

\subsection{Experiments for Different Architectures}\label{exp:models}
We try different model architectures of ImageNet including Wide ResNet-50-2 and ResNet 152 with $b=2$ and $K=10$. The results are shown in Figure~\ref{fig:wrn}. 
we find that our method outperforms ~\citep{carlini2022certified} for all $\sigma$ among different classifiers. 

\begin{figure*}[t] 
\small
\centering
    \begin{minipage}{0.45\linewidth}
        \centering
        \includegraphics[width=\textwidth]{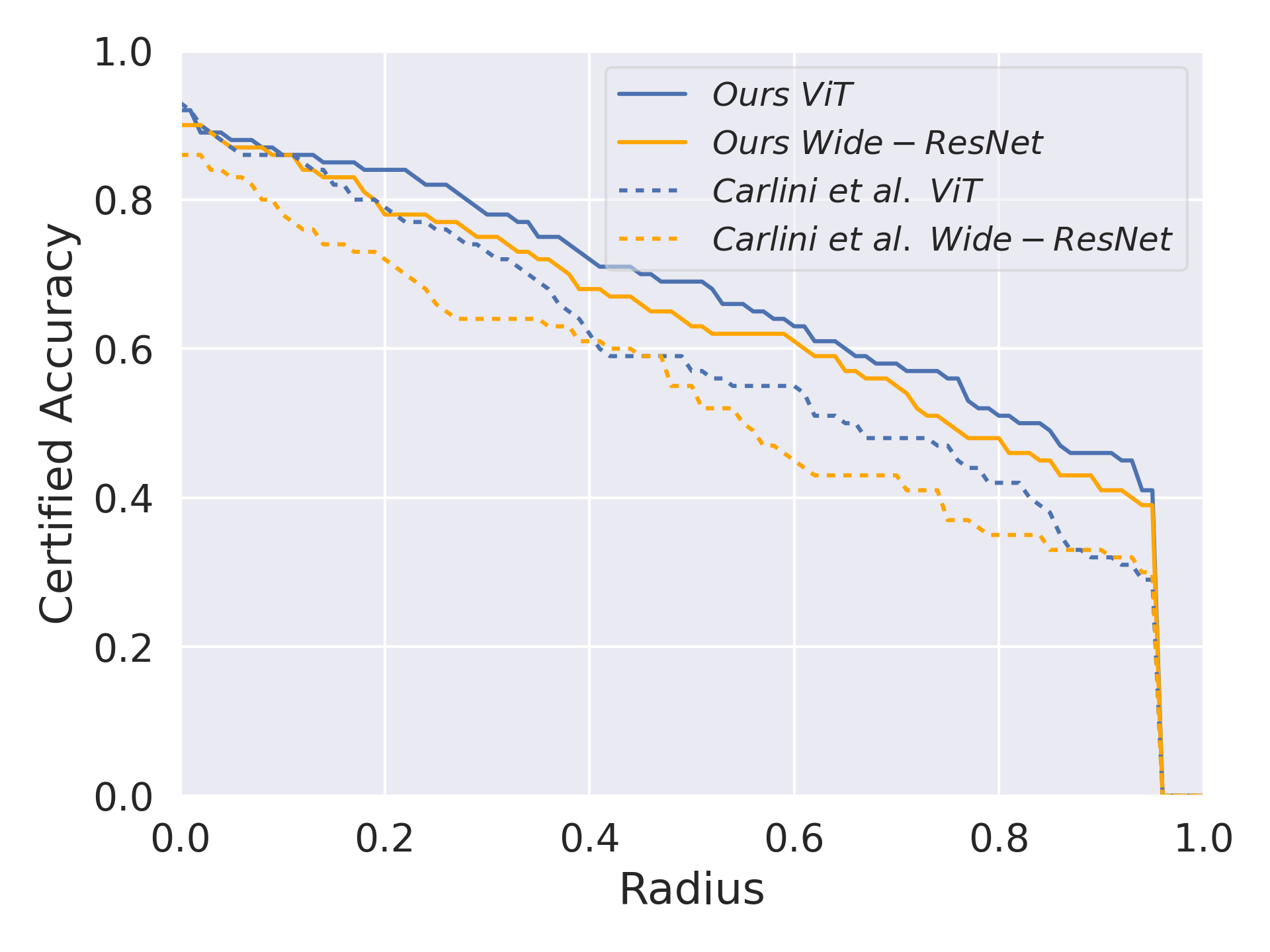}\\
        CIFAR=10
    \end{minipage}\hfill
    \begin{minipage}{0.45\linewidth}
        \centering
        \includegraphics[width=\textwidth]{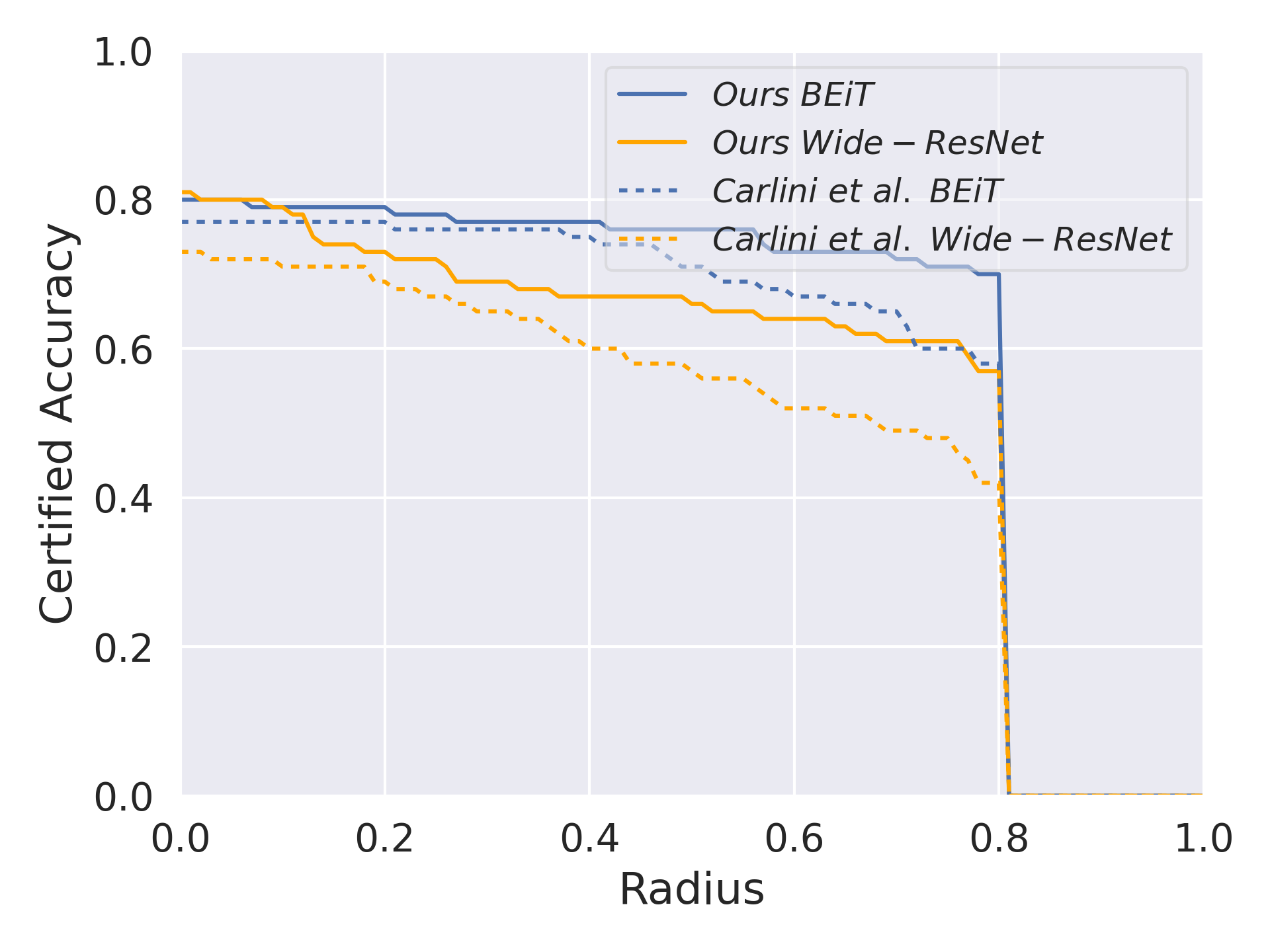}\\
        ImageNet
    \end{minipage}\hfill
\vspace{-2mm}
\caption{Certified accuracy with different architectures. Each line in the figure shows the certified accuracy among different $L_2$ adversarial perturbation bound with Gaussian noise $\sigma=0.25$.}
\label{fig:modelarch}
\end{figure*}

\begin{figure*}[t] 
\small
\centering
    \begin{minipage}{0.45\linewidth}
        \centering
        \includegraphics[width=\textwidth]{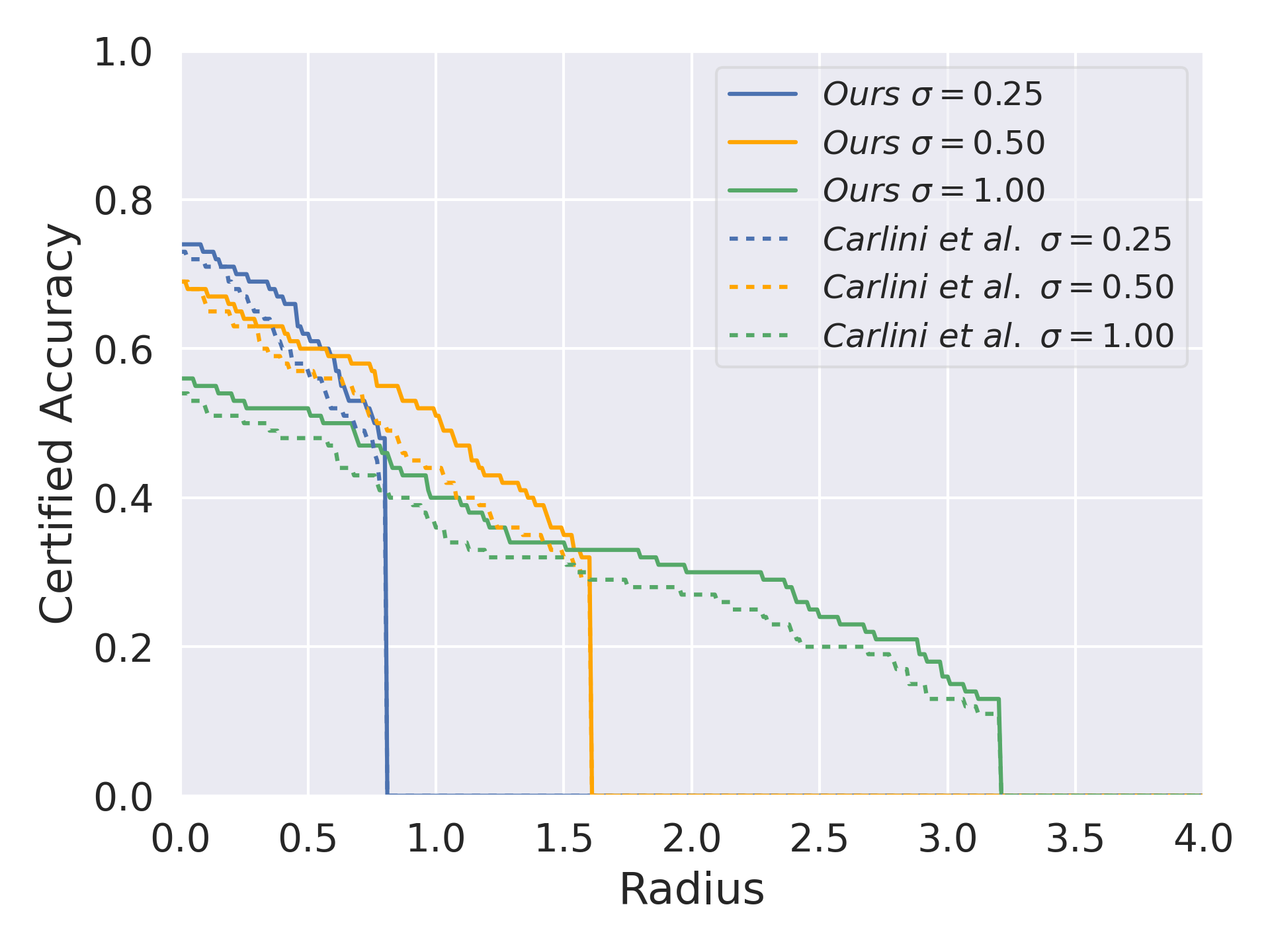}\\
        Wide ResNet-50-2
    \end{minipage}\hfill
    \begin{minipage}{0.45\linewidth}
        \centering
        \includegraphics[width=\textwidth]{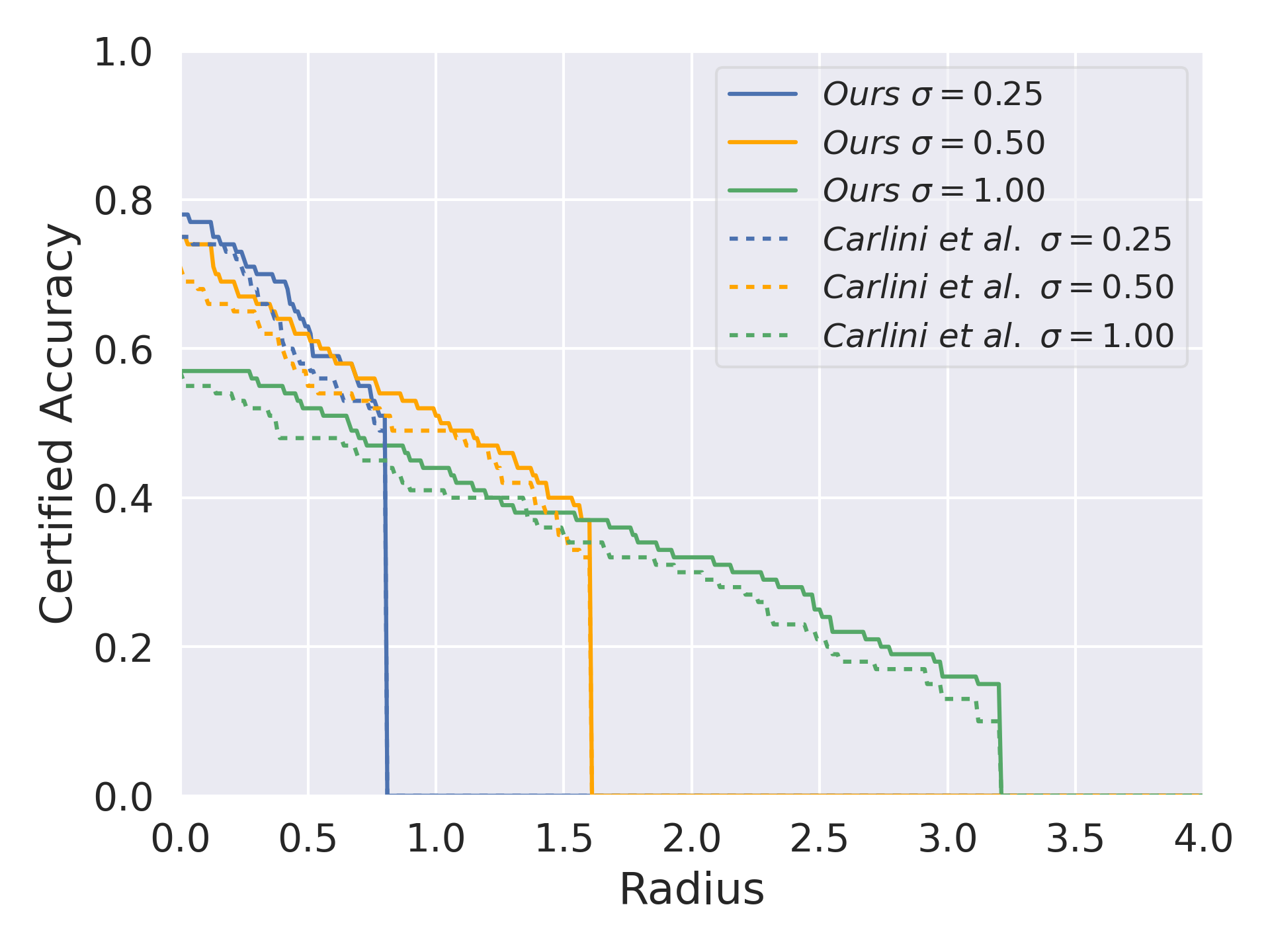}\\
        ResNet152
    \end{minipage}\hfill
\vspace{-2mm}
\caption{Certified accuracy of ImageNet for different architectures. The lines represent the certified accuracy with different $L_2$ perturbation bound with different Gaussian noise $\sigma \in \{0.25, 0.50, 1.00\}$.}
\label{fig:wrn}
\end{figure*}

\begin{figure*}[t] 
\small
\centering
    \begin{minipage}{0.45\linewidth}
        \centering
        \includegraphics[width=\textwidth]{figures/imagenet_mv.png}\\
        ImageNet
    \end{minipage}
    \begin{minipage}{0.45\linewidth}
        \centering
        \includegraphics[width=\textwidth]{figures/imagenet_steps.png}\\
        ImageNet
    \end{minipage}
\vspace{-2mm}
\caption{Ablation study. The left image shows the certified accuracy among different vote numbers with different radius $\epsilon \in \{0.0, 0.25, 0.5, 0.75\}$. Each line in the figure represents the certified accuracy of our method among different vote numbers $K$ with Gaussian noise $\sigma=0.25$. The right image shows the certified accuracy with different fast sampling steps $b$. Each line in the figure shows the certified accuracy among different $L_2$ adversarial perturbation bound.}
\label{fig:mv-cifar}
\end{figure*}

\end{document}